\def\BibTeX{{\rm B\kern-.05em{\sc i\kern-.025em b}\kern-.08em
    T\kern-.1667em\lower.7ex\hbox{E}\kern-.125emX}}
\newcommand{\cora}{CORA\xspace}
\newcommand{\citeseer}{CiteSeer\xspace}
\newcommand{\pubmed}{PubMed\xspace}
\newcommand{\magcs}{Coauthor CS\xspace}
\newcommand{\magph}{Coauthor Physics\xspace}
\newcommand{\amcomp}{Amazon Computers\xspace}
\newcommand{\amphoto}{Amazon Photo\xspace}
\newcommand{\BF}[1]{\textbf{#1}}
\newcommand{\PM}[1]{\small{\(\pm{#1}\)}}
\newtheorem{theorem}{Theorem}
\newtheorem{assumption}{Assumption}
\begin{document}

\title{Self-Enhanced GNN: Improving Graph Neural Networks Using Model Outputs
\thanks{This work was partially supported by GRF 14208318 from the RGC of HKSAR and the National Natural Science Foundation of China (NSFC) (Grant No. 61672552).}
}

\makeatletter
\newcommand{\linebreakand}{%
  \end{@IEEEauthorhalign}
  \hfill\mbox{}\par
  \mbox{}\hfill\begin{@IEEEauthorhalign}
}
\makeatother

\author{
\IEEEauthorblockN{Han Yang\IEEEauthorrefmark{1}, 
Xiao Yan\IEEEauthorrefmark{2},
Xinyan Dai\IEEEauthorrefmark{1},
Yongqiang Chen\IEEEauthorrefmark{1}, 
James Cheng\IEEEauthorrefmark{1}}
\IEEEauthorblockA{\IEEEauthorrefmark{1}\textit{The Chinese University of Hong Kong}\\
\IEEEauthorrefmark{2}\textit{Southern University of Science and Technology}\\
\{hyang, xydai, yqchen, jcheng\}@cse.cuhk.edu.hk,  yanx@sustech.edu.cn
}
\thanks{\IEEEauthorrefmark{2}Corresponding author.}
}

\maketitle

\begin{abstract}
    Graph neural networks (GNNs) have received much attention recently because of their excellent performance on graph-based tasks. However, existing research on GNNs  focuses on designing more effective models without considering much about the quality of the input data. In this paper, we propose \textit{self-enhanced GNN} (SEG), which improves the quality of the input data using the outputs of existing GNN models for better performance on semi-supervised node classification. As graph data consist of both topology and node labels, we improve input data quality from both perspectives.  For topology, we observe that higher classification accuracy can be achieved when the ratio of inter-class edges (connecting nodes from different classes) is low and propose \textit{topology update} to remove inter-class edges and add intra-class edges. For node labels, we propose \textit{training node augmentation}, which enlarges the training set using the labels predicted by existing GNN models. SEG is a general framework that can be easily combined with existing GNN models. Experimental results validate that SEG consistently improves the performance of well-known GNN models such as GCN, GAT and SGC across different datasets.
\end{abstract}

\begin{IEEEkeywords}
  graph neural networks, graph representation learning, semi-supervised node classification
\end{IEEEkeywords}

\section{Introduction}


Graph data are ubiquitous, e.g., friendship graphs in social networks, user-item graphs in recommender systems, and protein-protein interaction graphs in biology. For graph-based tasks such as node classification, link prediction and graph classification, \textit{graph neural networks} (\textit{GNNs}) achieve excellent performance thanks to its ability to utilize both graph structure and feature information. Motivated by graph spectral theory, GCN~\citep{kipf2017semi} conducts graph convolution to avoid the high complexity of spectral decomposition. Instead of using the adjacency matrix to derive the weights for neighborhood aggregation, GAT~\citep{velickovic2018graph} uses an attention module to learn the weights from data. SGC~\citep{felix2019simplifying} removes the non-linearity in GCN as it observes that GCN performs well mainly because of neighborhood aggregation rather than non-linearity. There are many other GNN models such as GraphSAGE~\citep{hamilton2017inductive}, JK-Net~\citep{xu2018representation}, Geom-GCN~\citep{pei2020geomgcn},  GGNN~\citep{li2015gated}
and ClusterGCN~\citep{chiang2019cluster}, and we refer readers to a comprehensive survey in~\citep{zhou2018graph}.


In this paper, we focus on \textit{semi-supervised node classification}, which is the task that most GNN models are designed for. Most existing works propose more effective GNN models, but the quality of the input data has not received much attention. However, \textit{data quality}\footnote{We adopt a task-specific definition of data quality. Given a GNN model and a specific problem, high data quality means that the GNN model achieves good performance for the problem on the input data. In this paper, we discuss data quality w.r.t. the node classification problem.} and model quality can be equally important for good performance. For example, if the input graph contains only \textit{intra-class edges} (i.e., edges connecting nodes from the same class) and no \textit{inter-class edges} (i.e., edges connecting nodes from different classes), node classification can achieve perfect accuracy with only one training sample from each connected component. Moreover, classification are usually easier with more training samples.



At first glance, data quality is fixed with the input data to a problem and cannot be improved. However, we observed that existing GNN models already achieve good classification accuracy, and thus their outputs can be used to update the input data to improve its quality. Then, the GNN models can be trained on the improved data to achieve better performance. We call this idea \textbf{self-enhanced GNN} (SEG) and propose two algorithms under this framework, namely \textbf{topology update (TU)} and \textbf{training node augmentation (TNA)}.



As GNN models essentially smooth the embeddings of neighboring nodes~\cite{nt2019revisiting}, inter-class edges can be harmful to the model performance as they make it difficult to distinguish nodes from different classes. To this end, TU removes inter-class edges and adds intra-class edges according to node labels predicted by a GNN model. Our analysis shows that TU reduces the percentage of inter-class edges in the graph as long as the performance of the GNN model is good enough. Since the number of labeled nodes are usually small for semi-supervised node classification, TNA enlarges the training set by treating the predicted labels of multiple GNN models as the ground truth. We show by analysis that using multiple diverse GNN models reduces errors in the enlarged training set. We also develop an effective method to create diversity among multiple GNN models. Both TU and TNA are general techniques that can be easily combined with existing GNN models. 


We conducted extensive experiments on three well-known GNN models, i.e., GCN, GAT and SGC, and seven widely used benchmark datasets. The results show that SEG consistently improves the performance of these GNN models. The reduction in the classification error is 16.2\% on average and can be up to 35.1\%. Detailed profiling finds that TU and TNA indeed improve the input data quality for node classification. Specifically, TU effectively deletes inter-class edges and adds intra-class edges, while most of the nodes added by TNA are assigned a right label. Based on these results, one interesting future direction is to extend the idea of SEG to other problems such as link prediction and graph classification where GNNs are also used. 



\begin{figure}[t]	
	\centering
	\begin{subfigure}{0.49\linewidth}
		\centering 
		\includegraphics[width=\linewidth]{./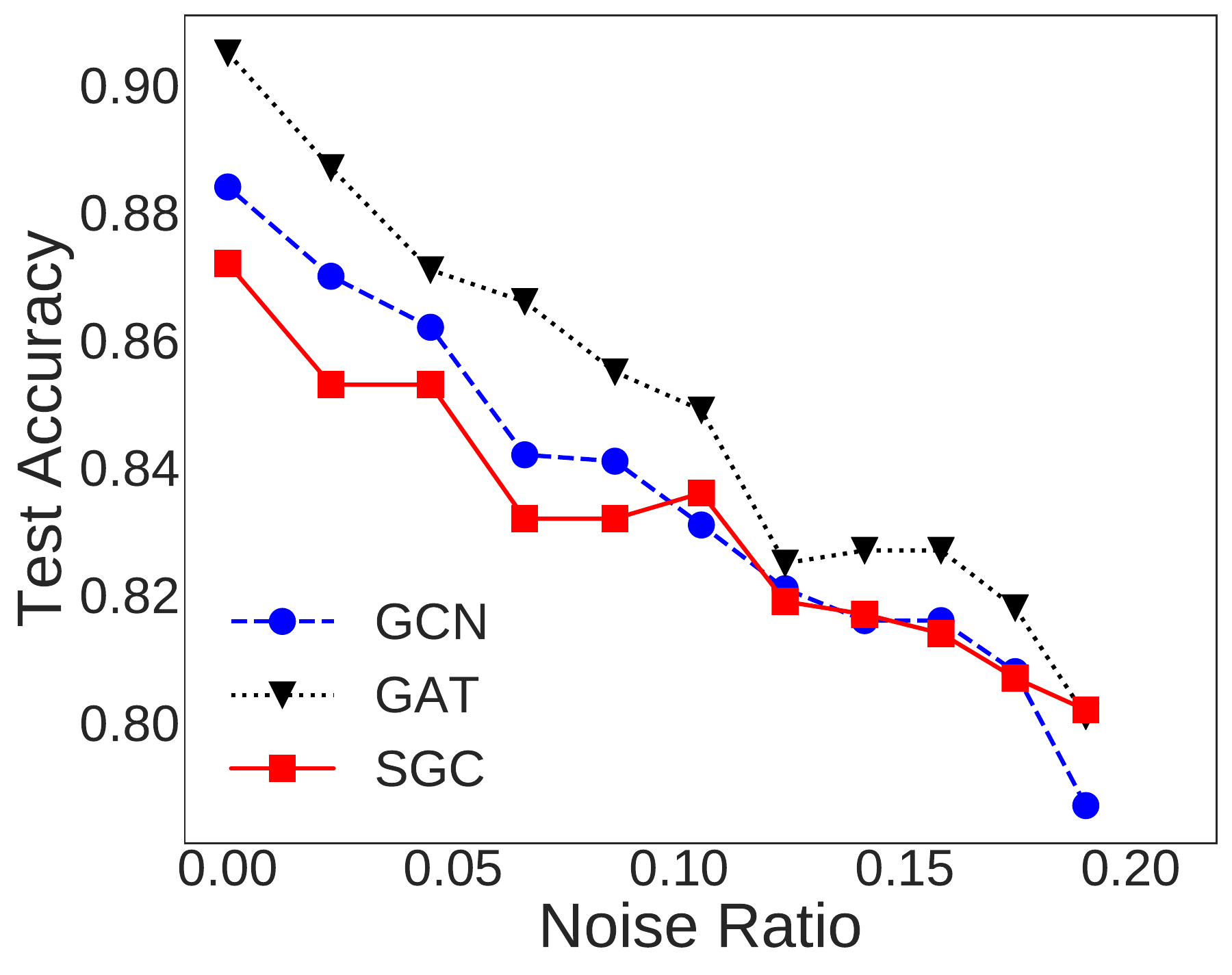} 
		\caption{Edge deletion}
		\label{fig:delete edge}
	\end{subfigure}
	\begin{subfigure}{0.49\linewidth}
		\centering 
		\includegraphics[width=\linewidth]{./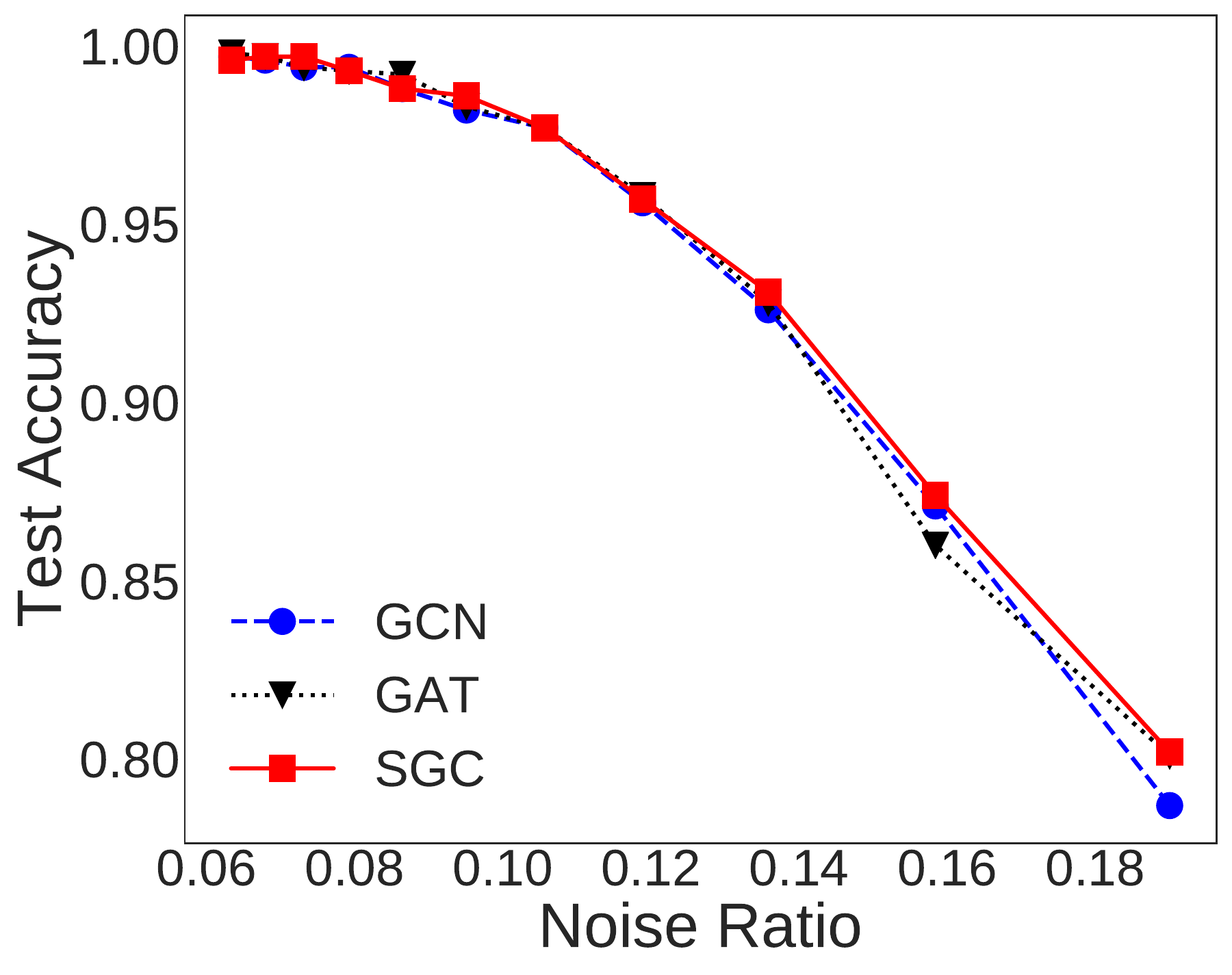} 
		\caption{Edge addition}
		\label{fig:add edge}
	\end{subfigure}
	\caption{The relation between \textit{noise ratio} and \textit{test accuracy} on the \cora dataset, note that both edge deletion and edge addition reduce noise ratio (from right to left on  the x-axis). Similar trend is also observed on other datasets} 
	\label{fig:noise ratio}
\end{figure}

\noindent \textbf{Relation with existing work.} SEG can be seen as a general framework for  co-training~\cite{blum1998combining} or self-training~\cite{yarowsky1995self-training} of GNNs, including both graph structures and node labels. Related work such as \cite{chen2019measuring, li2018deeper, sun2020multi} can be seen as specific algorithms for self-training or co-training of GNNs, which can also be incorporated into the SEG framework. The designs in SEG are  different from the above-mentioned works in the following aspects. \citep{chen2019measuring} removes/adds edges for GNN to mitigate over-smoothing and to support more graph convolution layers; in contrast, we attempt to improve data quality and observe that lower noise ratio leads to higher classification accuracy. \cite{li2018deeper} enlarges the training set for GNN using label propagation and a single trained GNN model, \cite{sun2020multi} adopts unsupervised learning techniques like DeepCluster~\cite{caron2018deep} to help the self-training of GNNs, and \cite{yang2020rethinking} uses a regularization term to make the model predictions of each node's neighbors to supervise itself. In contrast, in SEG we show by analysis that the diversity among multiple different models is crucial for the \textit{training node augmentation}  algorithm and propose an effective method to generate multiple diverse models. In addition, the theoretical foundations about noise in edges and model diversities developed in the SEG framework are general and  helpful to the above works as they can be considered as variants of \textit{topology update} or \textit{training node augmentation} algorithms. In graph adversarial attack literature, there are works that add/remove edges to deteriorate the accuracy of the GNN models~\cite{vassilis2019edge,wu2019adversarial,zugner2018adversarial}. Our \textit{topology update} algorithm is for an opposite purpose, which is to improve data quality and model accuracy.

\section{Topology Update}\label{sec:topology update}




Denote a graph as $\mathcal{G}=(\mathcal{V}, \mathcal{E})$, where $\mathcal{V}$ is the set of $n$ nodes and $\mathcal{E}$ is the set of $m$ edges. The ground-truth label of a node $v$ is $l(v)$. We define the \textit{noise ratio} of the graph $\mathcal{G}$ as
\begin{equation}
\alpha=\frac{|\left\{l(u)\neq l(v) | e_{uv}\in\mathcal{E} \right\}|}{|\mathcal{E}|}.
\end{equation}
Noise ratio measures the percentage of \textit{inter-class edges} (i.e., $e_{uv}$ with $l(u)\neq l(v)$) in the graph. 

\noindent \textbf{Motivation.} In Figure~\ref{fig:noise ratio}, we show the relation between \textit{classification accuracy} and \textit{noise ratio} for the \cora dataset, where \textit{edge deletion} randomly removes inter-class edges in the graph and \textit{edge addition} randomly adds \textit{intra-class edges} (i.e., $e_{uv}$ with $l(u)=l(v)$) based on the ground-truth labels. The results show that the classification accuracy is higher for all the three models under lower noise ratio. This is understandable since GNN models are generally low-pass filters that smooth the embeddings of neighboring nodes~\citep{nt2019revisiting}. As inter-class edges encourage nodes from different classes to have similar embeddings, they make the classification task difficult. Therefore, we make the following assumption.

\begin{assumption}
	Lower noise ratio leads to better classification performance for GNN models.
\end{assumption}




\noindent\textbf{Topology Update algorithms.} For Figure~\ref{fig:noise ratio}, we delete/add edges using the ground-truth labels. However, we do not have access to the ground-truth labels in a practical node classification problem. As popular GNN models already provide quite accurate predictions of the true labels, we can use their output for edge edition. Denote a GNN model trained for a node classification problem with $c$ classes as a mapping function $f:\mathcal{V} \rightarrow [c] $, where $[c]$ is the integer set $\left\{1,\ldots, c\right\}$. Edge deletion and edge addition can be conducted using Algorithm~\ref{alg:edge deletion} and Algorithm~\ref{alg:edge addition}, respectively.

\begin{algorithm}[t]
	\caption{Edge Deletion}
	\label{alg:edge deletion}
	\begin{algorithmic}
	\STATE {\bfseries Input:} A graph $\mathcal{G}=(\mathcal{V}, \mathcal{E})$ and a trained GNN model $f(\cdot)$ 
	\STATE {\bfseries Output:} A new graph $\mathcal{G}'=(\mathcal{V}, \mathcal{E}')$ 
	\STATE Initialize $\mathcal{E}'=\mathcal{E}$; 
	\FOR{each edge $e_{uv}\in\mathcal{E}'$}
	\IF{$f(u) \neq f(v)$ }
	\STATE Delete $e_{uv}$ from $\mathcal{E}'$; 
	\ENDIF  
	\ENDFOR	
	\end{algorithmic}
\end{algorithm}
\begin{algorithm}[t]
\caption{Edge Addition}
\label{alg:edge addition}
	\begin{algorithmic}
	\STATE {\bfseries Input:} A graph $\mathcal{G}=(\mathcal{V}, \mathcal{E})$ and a trained GNN model $f(\cdot)$ 
	\STATE {\bfseries Output:} A new graph $\mathcal{G}'=(\mathcal{V}, \mathcal{E}')$ 
	\STATE Initialize $\mathcal{E}'=\mathcal{E}$;
	\FOR{each node pair $(u, v)\in \mathcal{V} \times \mathcal{V}$}
	\IF{$e_{uv}\notin \mathcal{E}'$ and $f(u)=f(v)$ }
	\STATE Add $e_{uv}$ to $\mathcal{E}'$; 
	\ENDIF  
	\ENDFOR	
	\end{algorithmic}
\end{algorithm}

\begin{figure}[t]
	\centering
	\begin{subfigure}{0.49\linewidth}
		\centering 
		\includegraphics[width=\textwidth]{./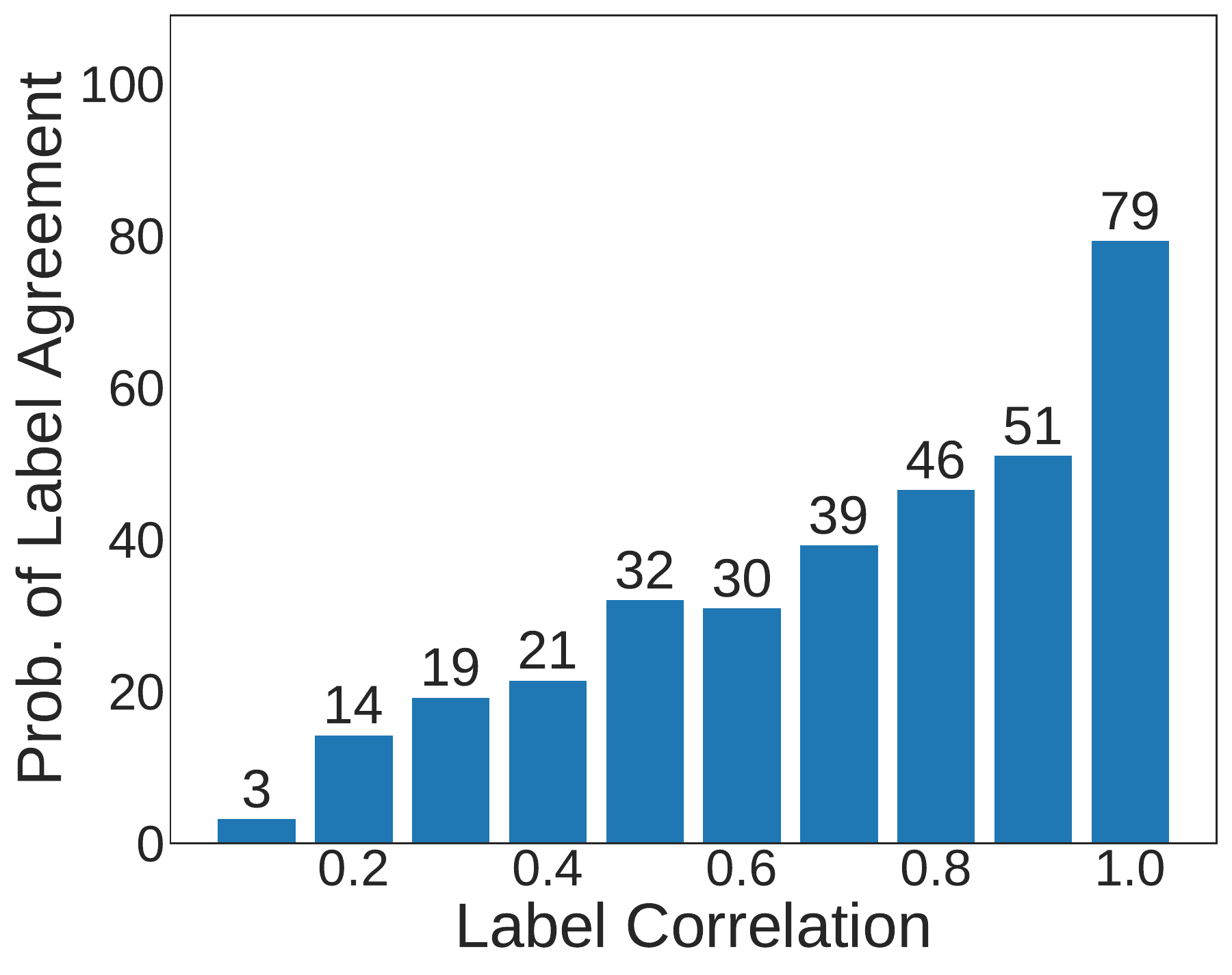} 
		\caption{\cora}
	\end{subfigure}
	\begin{subfigure}{0.49\linewidth}
		\centering 
		\includegraphics[width=\textwidth]{./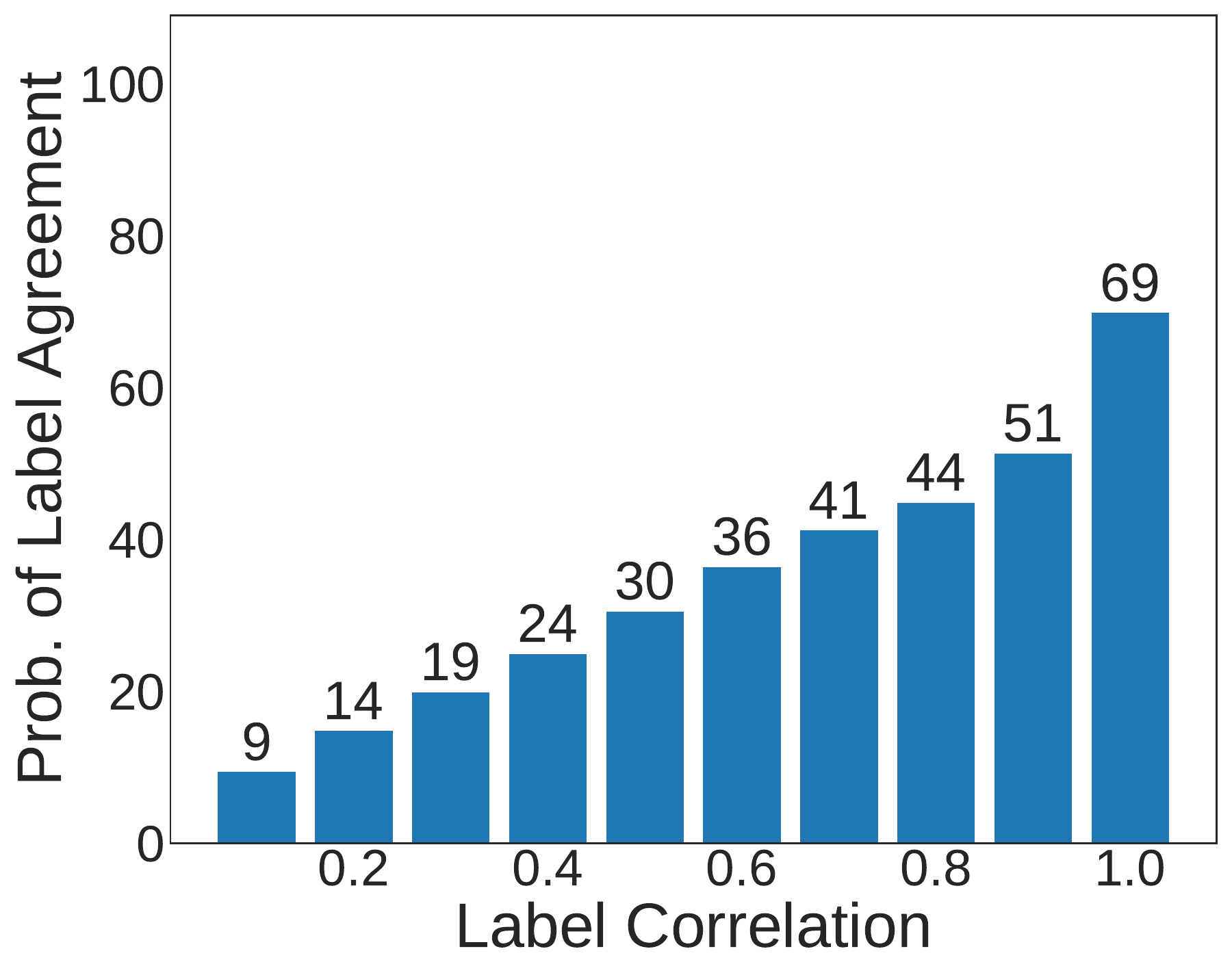} 
		\caption{\pubmed}
	\end{subfigure}
	\caption{The relation between label correlation (i.e., $g_v^{\top}g_u$) and the probability of having the same label for GCN}
	\label{fig:inter_prod_same_label}
\end{figure}

\noindent\textbf{Analysis.} In the following analysis, we show that Algorithm~\ref{alg:edge deletion} and Algorithm~\ref{alg:edge addition} reduce the noise ratio of the input graph if the classification accuracy of the GNN model $f(\cdot)$ is high enough. We first present some assumptions and definitions that will be used in the analysis.

\begin{assumption}\label{assup:symetric}
	(Symmetric Error) The GNN model $f(\cdot)$ has a classification accuracy of $p$ and makes symmetric errors, i.e., for every node $v \in \mathcal{V}$, we have $\mathbb{P}[f(v)=l(v)]=p$ and $\mathbb{P}[f(v)=k]=\frac{1-p}{c-1}$ for $k\in [c]$ and $ k\neq l(v)$, where $l(v)$ is the ground-truth label of node $v$.
\end{assumption} 

Note that symmetric error is a common assumption in the literature~\citep{chen2019ncv} and our analysis methodology is not limited to symmetric error. As the GNN model $f(\cdot)$ makes random errors (and hence the topology update algorithms also make random errors), we use the expected noise ratio $\alpha_{E}$ for the updated graph $\mathcal{G}'$ as a replacement for the noise ratio $\alpha$. For $\mathcal{G}'=(\mathcal{V}, \mathcal{E}')$, we define the expected noise ratio as $\alpha_{E}=\frac{m_{r}}{m_{r}+m_{a}}$, in which $m_{r}$ is the expected number of inter-class edges in $\mathcal{G}'$ and $m_{a}$ is the expected number of intra-class edges in $\mathcal{G}'$. We can compare the expected noise ratio of $\mathcal{G}'$ with the noise ratio of the original graph $\mathcal{G}$.

\begin{theorem}\label{th:deletion}  
	(Edge Deletion) If Assumption~\ref{assup:symetric} holds and Algorithm~\ref{alg:edge deletion} is used for edge deletion, denote the expected noise ratio of the output graph $\mathcal{G}'=(\mathcal{V}, \mathcal{E}')$ as $\alpha_{E}$, we have $\alpha_{E}< \alpha$ if  $p>\frac{2}{c+1}$.    
\end{theorem}


All proofs can be found in Section A of the supplementary material\footnote{\url{https://arxiv.org/pdf/2002.07518.pdf}}. Theorem~\ref{th:deletion} shows that edge deletion reduces noise ratio under a mild condition on the classification accuracy of the GNN model, i.e., $p\!>\!\frac{2}{c+1}$. For example, for a node classification problem with 5 classes, it only requires that $p\!>\!1/3$. To analyze the expected noise ratio of the graph after edge addition, we further assume that \textit{the classes are balanced}, i.e., each class has $n/c$ nodes.


\begin{theorem}\label{th:addition}
	(Edge Addition) If Assumption~\ref{assup:symetric} holds, the classes are balanced in $\mathcal{G}$, and Algorithm~\ref{alg:edge addition} is used for edge addition, denote the expected noise ratio of the output graph $\mathcal{G}'=(\mathcal{V}, \mathcal{E}')$ as $\alpha_{E}$, we have $\alpha_{E}<\alpha$ if $p>\frac{\alpha+\sqrt{\alpha^2+[(c-1)(1+c\alpha \lambda)-c\alpha](c+\alpha-1)}}{c+\alpha-1}$, in which $\lambda=\frac{m}{n^2}$ is the edge density of $\mathcal{G}$.     
\end{theorem}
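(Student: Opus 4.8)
The plan is to mirror the computation behind Theorem~\ref{th:addtion}, reusing its per-pair edge probabilities, but now accounting for the fact that Algorithm~\ref{alg:edge addition} \emph{keeps} all of $\mathcal{E}$ and, on top of that, inserts an edge on every currently non-adjacent pair $(u,v)$ for which $f(u)=f(v)$. Under Assumption~\ref{assup:symetric} the relevant per-pair probabilities are exactly those derived in the proof of Theorem~\ref{th:addtion}: an intra-class pair is linked with probability $p_a=p^2+\frac{(1-p)^2}{c-1}$ and an inter-class pair with probability $p_r=\frac{2p(1-p)}{c-1}+\frac{(c-2)(1-p)^2}{(c-1)^2}$, and I would again exploit the inequality $(c-1)p_r<1-p^2$ established there to linearize $p_r$. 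The new ingredient is counting candidate pairs. Invoking the balanced-class hypothesis (each class has $n/c$ nodes), the number of intra-class pairs is $c\binom{n/c}{2}\approx \frac{n^2}{2c}$ and the number of inter-class pairs is $\binom{n}{2}-\frac{n^2}{2c}\approx\frac{(c-1)n^2}{2c}$; subtracting the already-present edges gives roughly $\frac{n^2}{2c}-(1-\alpha)m$ intra-class non-edges and $\frac{(c-1)n^2}{2c}-\alpha m$ inter-class non-edges. This is precisely where the edge density $\lambda=\frac{m}{n^2}$ will enter the final condition.

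With these counts I would write the expected edge populations of $\mathcal{G}'$ as the retained originals plus the expected new edges, namely $m_a=(1-\alpha)m+\bigl(\frac{n^2}{2c}-(1-\alpha)m\bigr)p_a$ and $m_r=\alpha m+\bigl(\frac{(c-1)n^2}{2c}-\alpha m\bigr)p_r$. Since $\alpha_E=\frac{m_r}{m_r+m_a}$, the target inequality $\alpha_E<\alpha$ is equivalent to the linear-fractional condition $(1-\alpha)\,m_r<\alpha\,m_a$. Substituting the two expressions, normalizing throughout by $n^2$ to surface $\lambda$, and replacing $p_r$ by its upper bound $\frac{1-p^2}{c-1}$ converts this into a polynomial inequality in $p$ whose coefficients are built from $\alpha$, $c$ and $\lambda$. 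Collecting terms in $p$ yields a quadratic $A(\alpha,c,\lambda)\,p^2+B(\alpha,c,\lambda)\,p+C(\alpha,c,\lambda)>0$; the stated threshold $p>\frac{\alpha+\sqrt{\alpha^2+[(c-1)(1+c\alpha\lambda)-c\alpha](c+\alpha-1)}}{c+\alpha-1}$ should emerge as the larger root of this quadratic, after one checks that the leading coefficient is positive so that ``$p$ exceeds the larger root'' is the correct direction of the implication.

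The main obstacle I anticipate is the bookkeeping in this quadratic rather than any conceptual difficulty. Unlike Theorem~\ref{th:addtion}, where the density canceled and a one-line inequality sufficed, here the contributions of the retained original edges and of the newly added edges over the $\Theta(n^2)$ non-adjacent pairs must be tracked \emph{simultaneously}, so the edge density $\lambda$ survives in the coefficients and the condition genuinely becomes quadratic in $p$. Three points require care: first, correctly enumerating the non-adjacent intra- and inter-class pairs from the balanced-class assumption while retaining the $\lambda$ corrections; second, confirming that the leading coefficient of the quadratic is positive (equivalently, that the discriminant $\alpha^2+[(c-1)(1+c\alpha\lambda)-c\alpha](c+\alpha-1)$ is nonnegative and the operative root is the larger one); and third, the implicit sparsity caveat that the non-edge counts $\frac{n^2}{2c}-(1-\alpha)m$ and $\frac{(c-1)n^2}{2c}-\alpha m$ remain positive, i.e.\ $\lambda$ is small enough that the graph is not already near-complete. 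Once the quadratic is assembled and its sign structure verified, reading off the larger root gives the claimed bound.
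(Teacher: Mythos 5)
Your proposal follows essentially the same route as the paper's proof: the paper likewise reduces $\alpha_E<\alpha$ to a condition on the added edges alone (equivalent to your $(1-\alpha)m_r<\alpha m_a$ formulation), counts candidate intra-/inter-class pairs under the balanced-class assumption, subtracts the $m\alpha$ and $m(1-\alpha)$ overlaps with existing edges, bounds $(c-1)p_r$ by $1-p^2$, and solves the resulting quadratic in $p$ for its larger root. The one bookkeeping discrepancy worth noting is that the paper counts \emph{ordered} pairs ($n^2/c$ intra-class and $(c-1)n^2/c$ inter-class), which is the convention under which $\lambda=m/n^2$ enters with the stated coefficient; your $\binom{n}{2}$-based counts, carried through literally, would yield $2c\alpha\lambda$ in place of $c\alpha\lambda$ in the discriminant.
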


%

The bound on $p$ in Theorem~\ref{th:addition} is complex for interpretation but we can approximate it as $p>\frac{\alpha+\sqrt{\alpha^2+(c+\alpha-1)(c-1-c\alpha)}}{c-1}$ if we assume that the $\lambda$ term is small enough to be ignored. The bound can be further simplified as $p>\sqrt{1-\alpha}$ if we assume that $\alpha$ is small compared to $c$ and approximate $c-1-c\alpha$ with $(c-1)(1-\alpha)$. 


Theorem~\ref{th:deletion} and Theorem~\ref{th:addition} can be extended to more general assumptions. The symmetric error assumption can be replaced with an error matrix $\bm{E}\in\mathbb{R}^{c\times c}$, where $\bm{E}(i,j)$ is the probability of classifying class $i$ as class $j$. The number of nodes in each class can also be different. The analysis methodology in our proofs can still be applied but the bounds on $p$ will be in more complex forms. In addition, we show in the experiments that edge deletion and addition can be conducted simultaneously.

\noindent \textbf{Threshold-based update selection.} The GNN model $f(\cdot)$ usually outputs a distribution over the classes (e.g., using softmax) rather than a single decision. For a node $v$, we denote its class distribution provided by the model as $g_v\in\mathbb{R}^{c}$ with $g_v[k]\ge 0$ for $k \in [c]$ and $\sum_{k=1}^{c} g_v[k] =1$. In Figure~\ref{fig:inter_prod_same_label}, we plot the relation between label correlation and the probability that a pair of nodes have the same label (called node alignment). The results show that a  pair of nodes is more likely to be in the same class under higher label correlation. Thus in practice, for edge deletion, we first generate a candidate edge set $\mathcal{C}$ based on the classification labels using Algorithm~\ref{alg:edge deletion}. For each candidate edge $e_{uv}$ in $\mathcal{C}$, we calculate the correlation between their class distributions (i.e., $g_u^{\top}g_v$) and select the edges with $g_u^{\top}g_v\le \tau_d$ for actual deletion, where $\tau_d$ is a threshold. For edge addition, we also generate a candidate set using Algorithm~\ref{alg:edge addition} first and add only edges with $g_v^{\top}g_u\ge \tau_a$. Threshold-based update selection makes Algorithm~\ref{alg:edge deletion} and Algorithm~\ref{alg:edge addition} more conservative, which helps avoid deleting intra-class edges and adding inter-class edges.  We use the test accuracy on the validation set to tune the thresholds $\tau_d$ and $\tau_a$.

\begin{figure}[t]
	\centering
	\begin{subfigure}{0.49\linewidth}
		\centering 
		\includegraphics[width=\textwidth]{./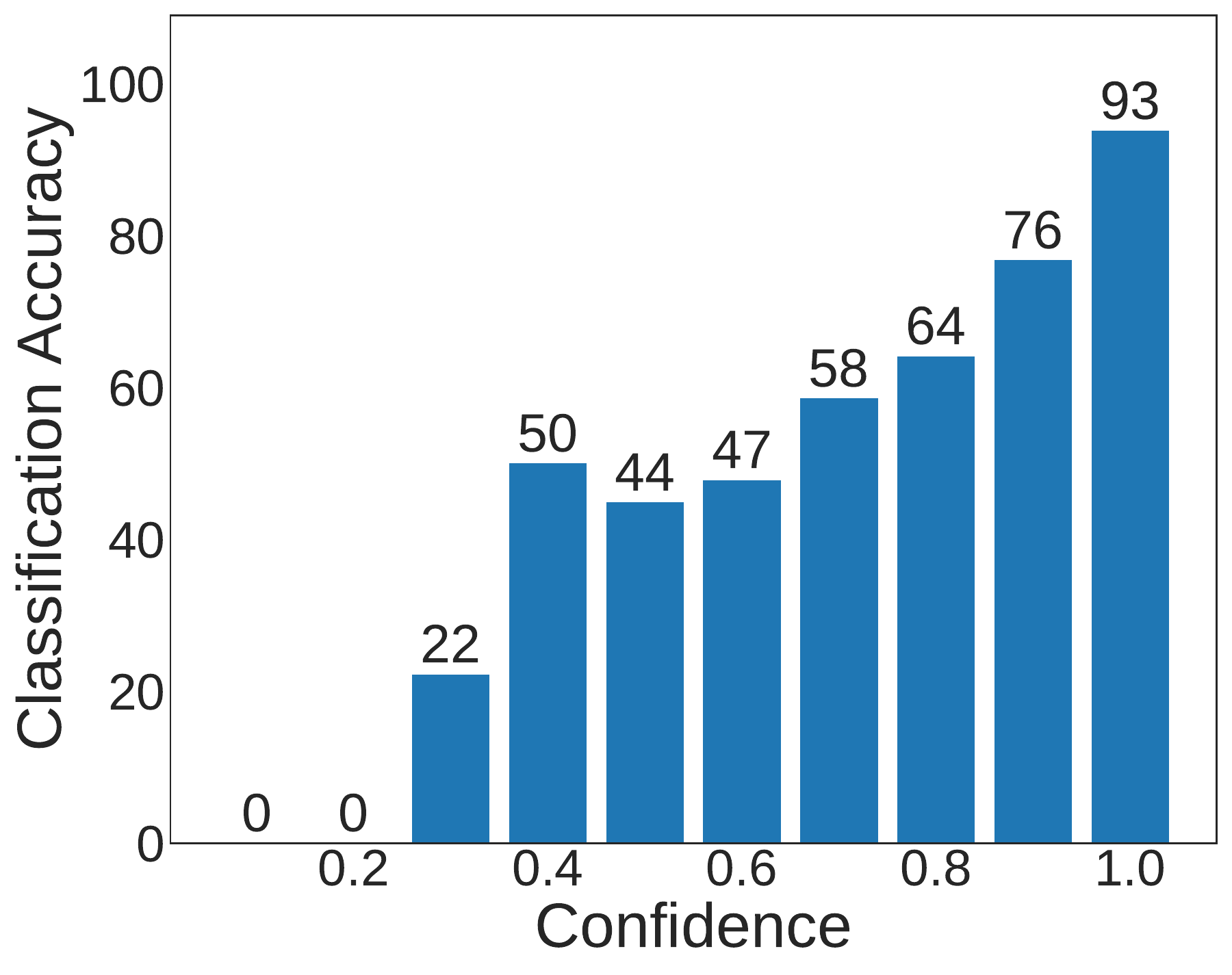}
		\caption{\cora}
	\end{subfigure}
	\begin{subfigure}{0.49\linewidth}
		\centering 
		\includegraphics[width=\textwidth]{./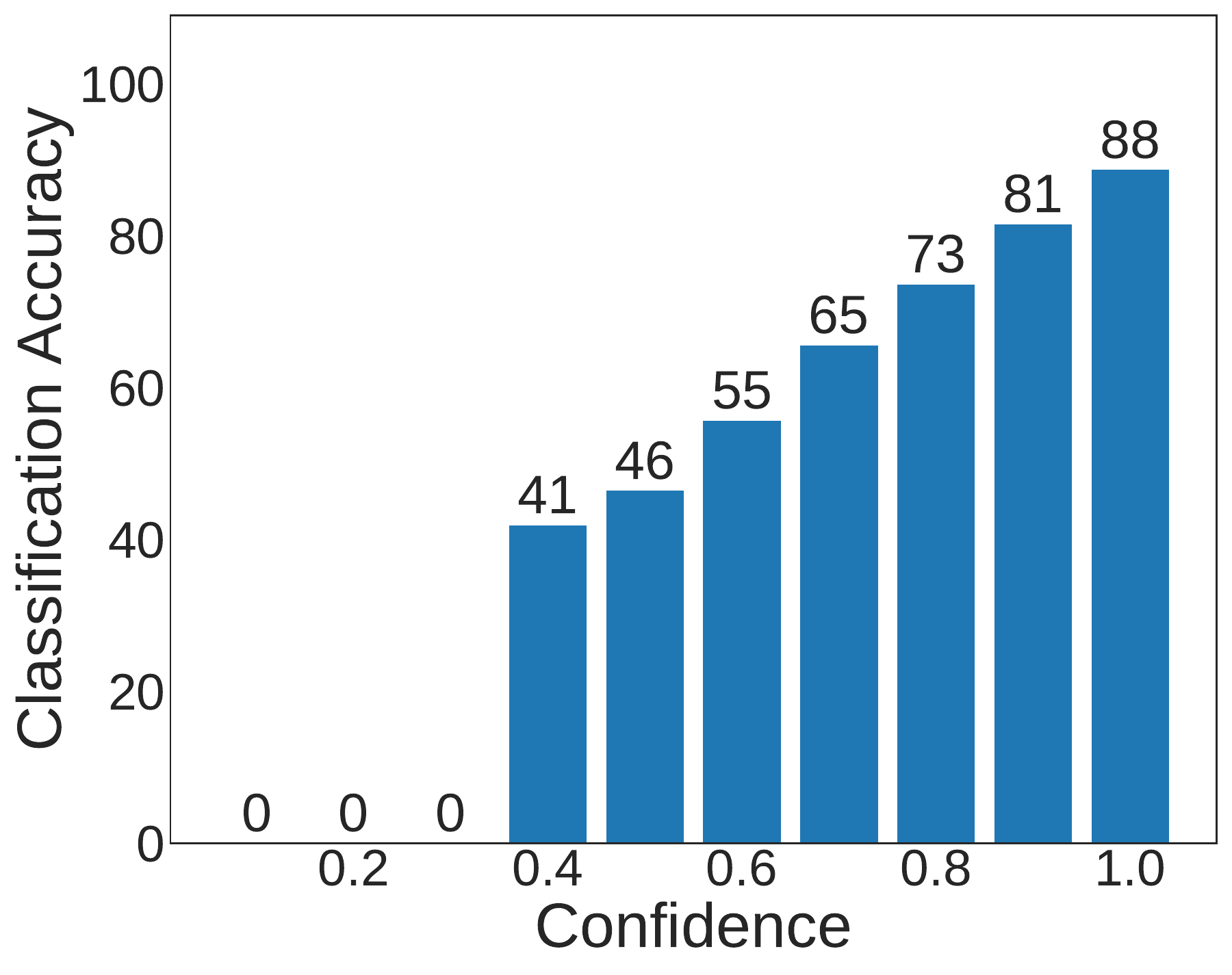} 
		\caption{\pubmed}
	\end{subfigure}
	\caption{The relation between the confidence score $c_v$ and the probability of giving the right label prediction for GCN}
	\label{fig:confidence_acc}
\end{figure}
\section{Training Node Augmentation}\label{sec:training node augmentation}

\noindent \textbf{Motivation.} In Figure~\ref{fig:sample and accuracy}, we observe the influence of the number of training nodes on classification accuracy. The results show that using more training nodes consistently leads to higher classification accuracy for all three models. Unfortunately, for semi-supervised node classification, usually only a very small number of labeled nodes are available. To enlarge the training set, an intuitive idea is to train a GNN model to label some nodes and add those nodes to the training set. However, a GNN model usually makes a considerable amount of errors in its label prediction, and naively using the predicted labels as the ground-truth labels may lead to worse performance.   

\begin{figure}[t]
	\centering
	\includegraphics[width=0.49\linewidth]{./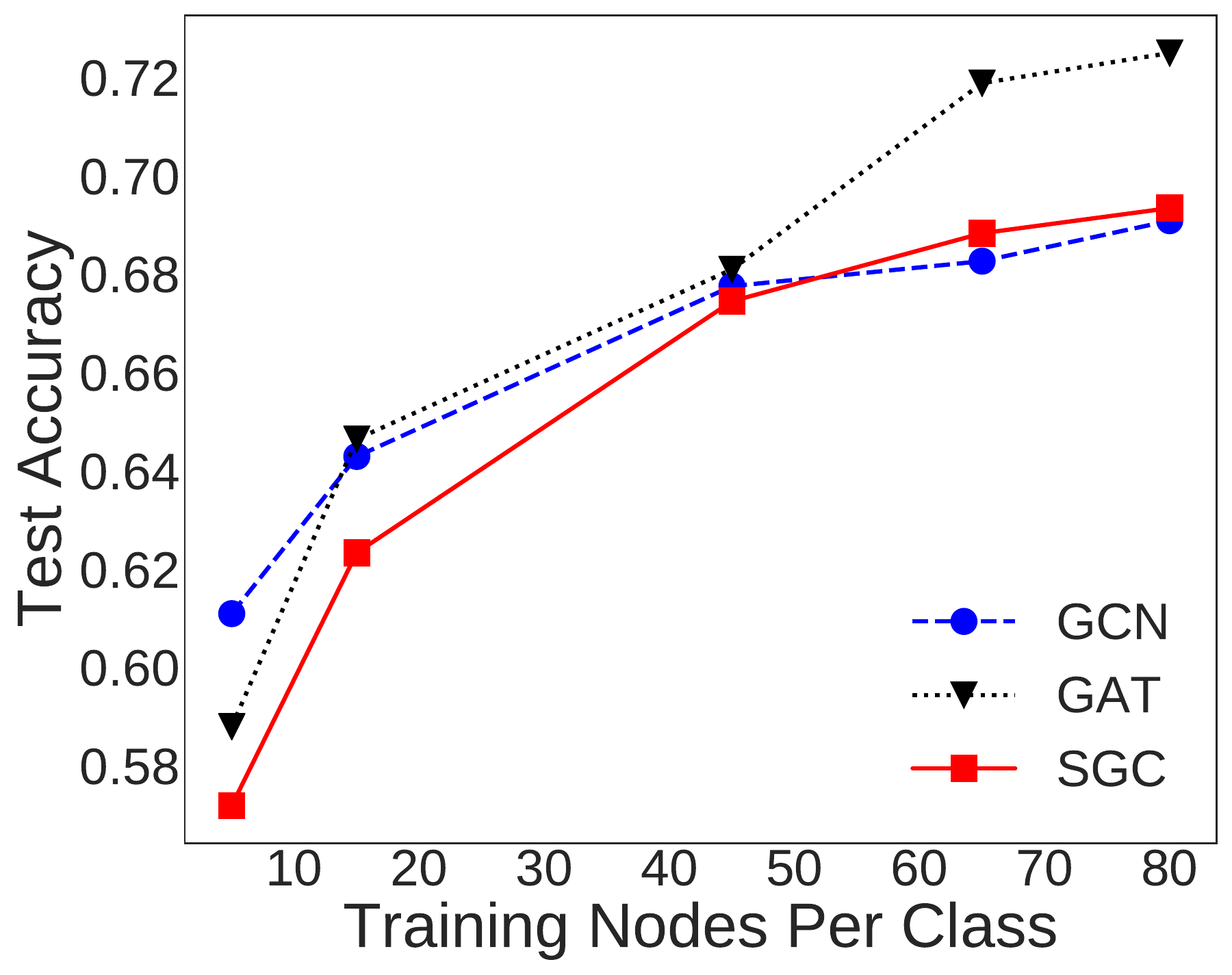} 
	\includegraphics[width=0.49\linewidth]{./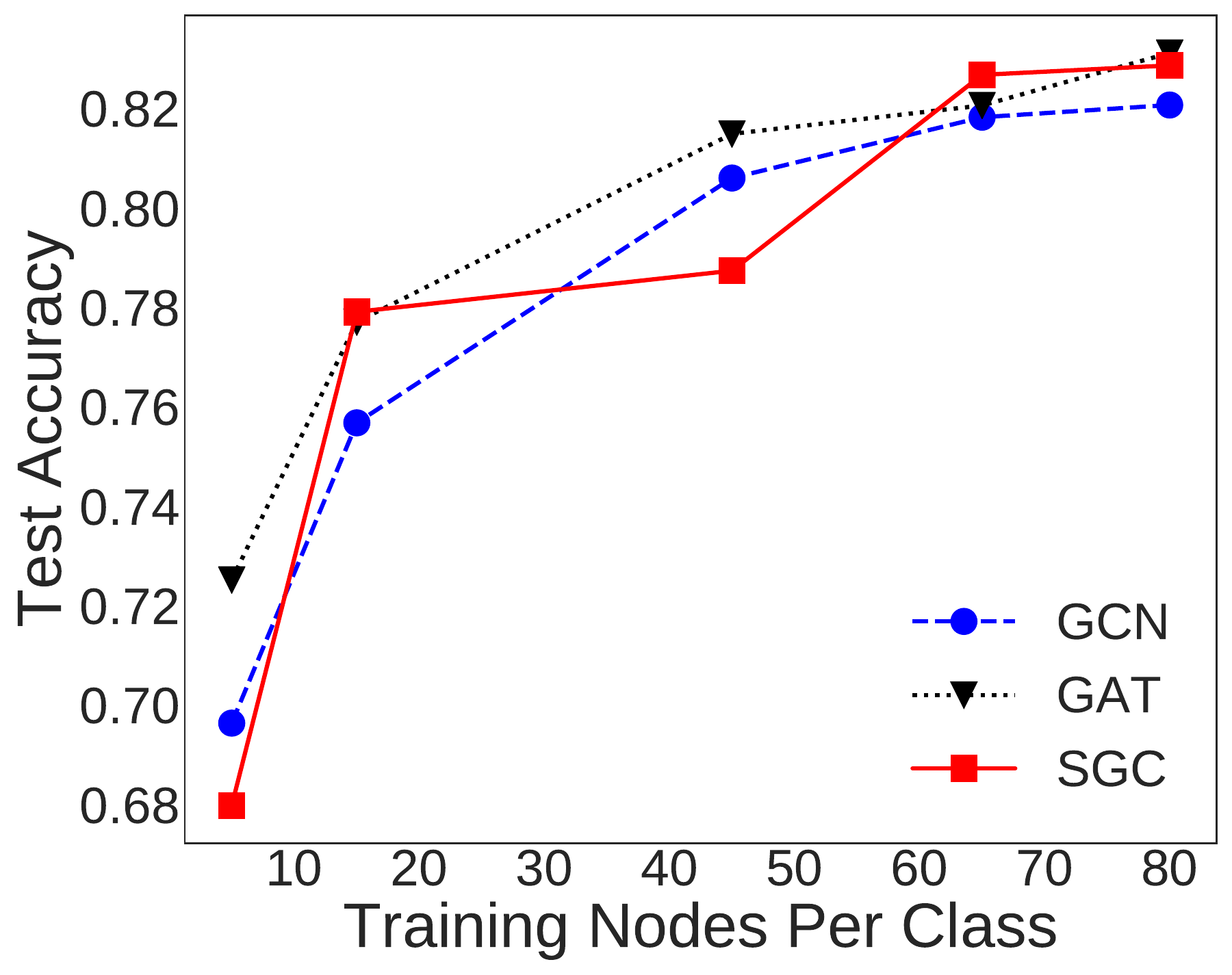}
	\caption{The relation between the number of training nodes and test accuracy for CiteSeer (left) and Cora (right)} 
	\label{fig:sample and accuracy}
\end{figure}

\begin{figure}[t]
	\centering
	\includegraphics[width=0.49\linewidth]{./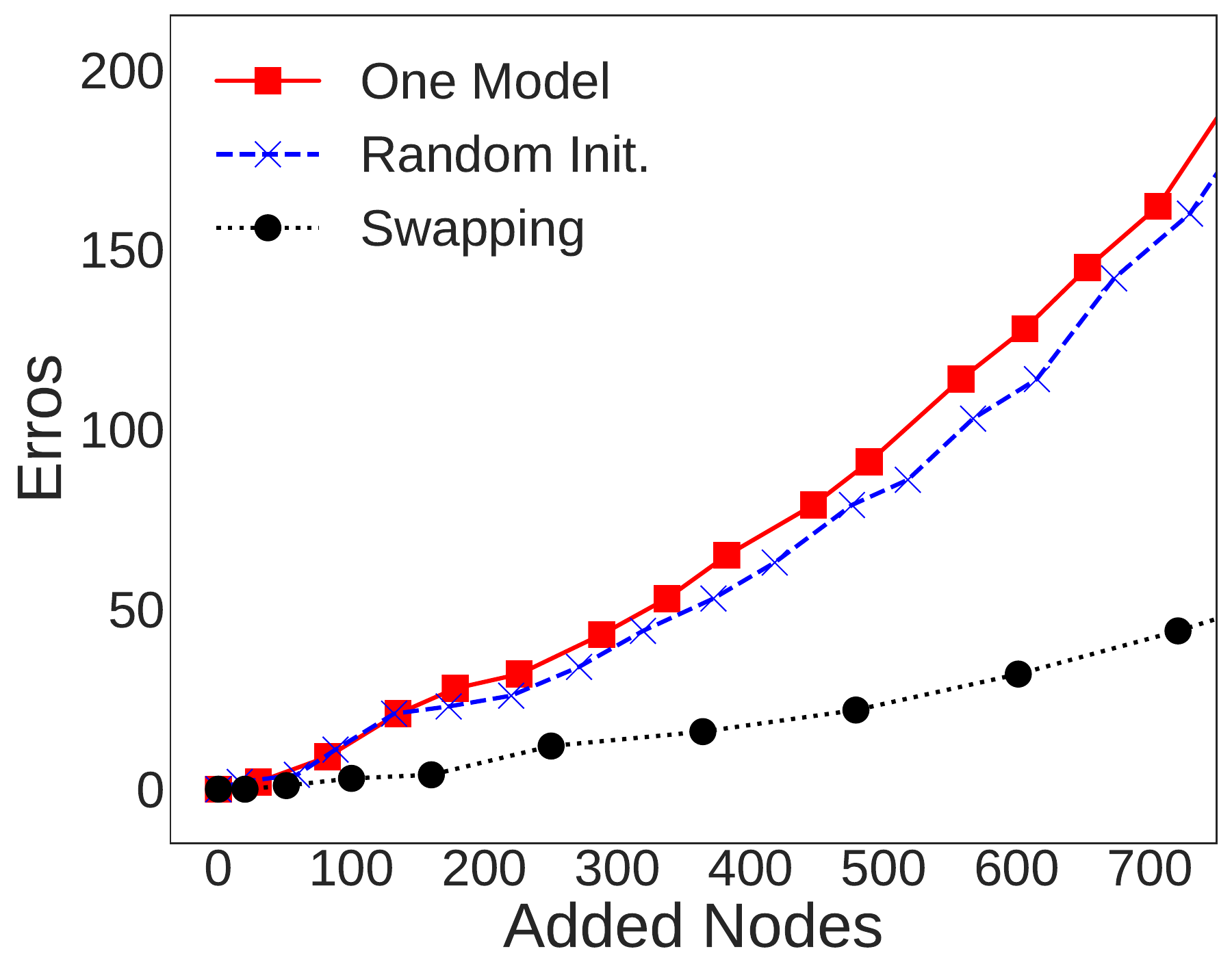} 
	\includegraphics[width=0.49\linewidth]{./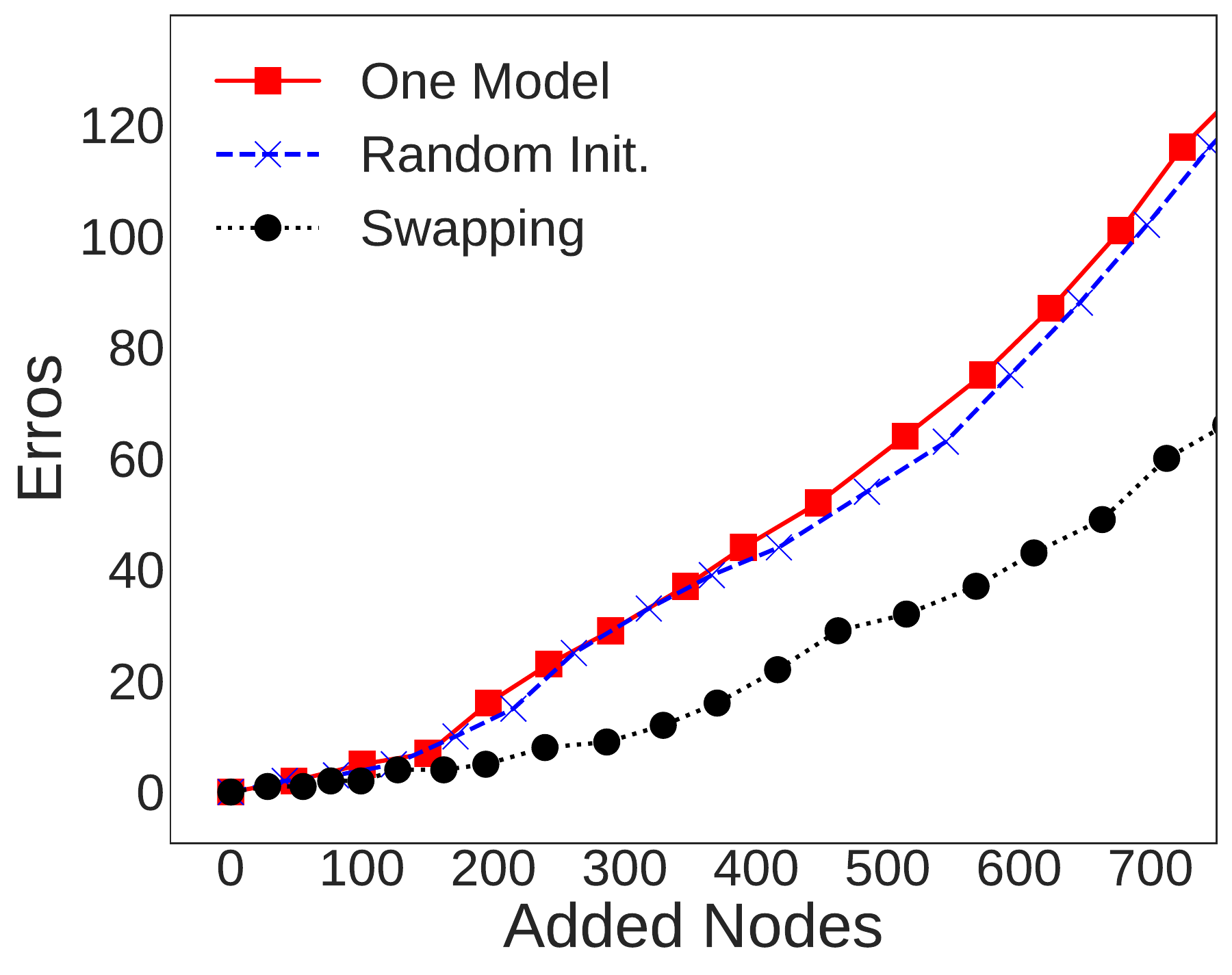} 
	\caption{Errors in $\mathcal{T}'$ for CiteSeer (left) and PubMed (right) using different methods (with the GCN model)} 
	\label{fig:diversity}
\end{figure}
     



\noindent\textbf{Training Node Augmentation algorithm.} Considering a GNN model $g(\cdot)$ that outputs a probability distribution over $c$ classes, we define the \textit{confidence} ($c_v$) and \textit{prediction result} ($r_v$) of node $v$ as
\begin{equation*}
c_v=\max_{1\le k\le c}g_v[k] \quad \text{and} \quad r_v=\arg\max_{1\le k\le c}g_v[k],
\end{equation*}
where $r_v$ is the label of $v$ predicted by $g(\cdot)$ and $c_v$ is the likelihood of $r_v$. Usually $r_v$ is more likely to be correct (i.e., $r_v=l(v)$) when $c_v$ is large. This is supported by the Figure~\ref{fig:confidence_acc}, in which we plot the relation between confidence and classification accuracy. The results show that the model is more likely to give the right label prediction when the confidence is higher.  Utilizing $c_v$ and $r_v$, we present the training node augmentation (TNA) procedure in Algorithm~\ref{alg:augmentation}, which produces an enlarged training set $\mathcal{T}'$ using the outputs of multiple GNN models. In Algorithm~\ref{alg:augmentation}, $\mathcal{T}$ and $\mathcal{S}$ denote the original training set and validation set. Before adding a node to $\mathcal{T}'$, we check if it is already in $\mathcal{T}$ and $\mathcal{S}$ to avoid assigning a new label to nodes in the two sets. Note that $\mathcal{T} \cup \mathcal{T}'$ is utilized to train a new model.



\begin{algorithm}[t]
	\caption{Training Node Augmentation}
	\label{alg:augmentation}
	\begin{algorithmic}
		\STATE {\bfseries Input:} A graph $\mathcal{G}=(\mathcal{V}, \mathcal{E})$ and $L$ trained GNN models $g^1,g^2,\cdots,g^L$ 
		\STATE {\bfseries Output:} An enlarged training set $\mathcal{T}'$ 
		\STATE Initialize $\mathcal{T}'=\emptyset$;
		\FOR{each model $g^l$ in [$g^1,g^2,\cdots,g^L$]}
		\STATE Calculate confidence $c^l$ and prediction result $r^l$;
		\STATE Local candidate set $\mathcal{C}^l := \{v | c^l_v\ge \tau_c, v \in \mathcal{V} \}$;
		\ENDFOR
		\STATE Candidate set $\mathcal{C} := \cap_{l=1}^{L} \mathcal{C}^l$; 
		\FOR{each node $v$ in $\mathcal{C}$}
		\IF{$v\notin \mathcal{T}$, $v\notin \mathcal{S}$ and $r^1_v=r^2_v=\cdots=r^L_v$ }
		\STATE Add $v$ to $\mathcal{T}'$ with label $r^1_v$;
		\ENDIF
		\ENDFOR	
	\end{algorithmic}
\end{algorithm} 


Algorithm~\ref{alg:augmentation} is based on two key ideas. The first one is only considering nodes with a high confidence (i.e., $c^l_v\ge \tau_c$) as the candidates to be added to $\mathcal{T}'$ since GNN models tend to produce more accurate label predictions at higher confidence. We use a threshold $\tau_c$ to select the high confidence nodes. The second and most important idea is to utilize the \textit{diversity} of multiple GNN models to reduce the number of errors in $\mathcal{T}'$. With multiple diverse models, considering when some classifiers assign a wrong label to node $v$, it will not be added to $\mathcal{T}'$ if at least one classifier gives the correct label. In the following, we formalize this intuition with an analysis under the case of using two GNN models $g^1$ and $g^2$.

\noindent\textbf{Analysis.} Following Assumption~\ref{assup:symetric}, we assume that both $g^1$ and $g^2$ have a classification accuracy of $p$ and make symmetric error. We also simplify Algorithm~\ref{alg:augmentation} and assume that a node is added to $\mathcal{T}'$ if the two models give the same label (i.e., $r^1_v=r^2_v$). Algorithm~\ref{alg:augmentation} can be viewed as a more conservative case of this simplified algorithm with $p'>p$ as it only adds high-confidence nodes. The accuracy of $\mathcal{T}'$ is defined as $q=\frac{|\left\{l(v)=r^1_v=r^2_v | v\in\mathcal{T}' \right\}|}{|\mathcal{T}'|}$. We are interested in the relation between $p$ and $q$, which are the accuracies of $\mathcal{T}'$ when using one model and two models for TNA, respectively. As the two models $g^1$ and $g^2$ are trained on the same graph structure, it is unrealistic to assume that they are independent. Therefore, we make the following assumption on how they correlate.


\begin{assumption}\label{assumption:correlation}
(Model Correlation) The correlation between the two GNN models $g^1$ and $g^2$ can be formulated as
\[
	\begin{aligned}	
	&\begin{cases}
		\mathbb{P}[r^2_v=l(v)|r^1_v=l(v)]=\beta\\
		\mathbb{P}[r^2_v=k|r^1_v=l(v)]=\frac{1-\beta}{c-1}
		\end{cases}
		\text{and}\\
	&\begin{cases}
		\mathbb{P}[r^2_v=l(v)|r^1_v=k]=\gamma\\
		\mathbb{P}[r^2_v=j|r^1_v=k]=\frac{1-\gamma}{c-1}
		\end{cases},
	\end{aligned}
\]
where $k\in [c]$ and $ k\neq l(v)$, $j\in [c]$ and $ j\neq l(v)$. We also assume that $\beta\ge p$ as the two models should be positively correlated. 	
\end{assumption}

\begin{theorem}\label{th:label}
	(Accuracy)
	Under Assumption~\ref{assumption:correlation} and assume that $p>1/2$, we have the following results on the accuracy $q$ of  $\mathcal{T}'$
	
	(1) $q\ge p$; 
	
	(2) $q$ is maximized when $\beta=\gamma=p$, in which case the two models $g^1$ and $g^2$ are independent.     
\end{theorem}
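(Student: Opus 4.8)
The plan is to recast the accuracy $q$ as a ratio of two probabilities for a randomly chosen node $v$: the probability that both models are correct, divided by the probability that they agree. That is, $q = \mathbb{P}[r^1_v = r^2_v = l(v)] \,/\, \mathbb{P}[r^1_v = r^2_v]$. First I would evaluate these two quantities using Assumption~\ref{assumption:correlation} together with the symmetric-error structure of Assumption~\ref{assup:symetric}. The numerator is immediate: $\mathbb{P}[r^1_v = l(v)]\,\mathbb{P}[r^2_v = l(v)\mid r^1_v = l(v)] = p\beta$. For the denominator I would add to this ``both correct'' event the $c-1$ disjoint ``both wrong on the same label $k$'' events; by symmetric error and Assumption~\ref{assumption:correlation} each of the latter contributes $\frac{1-p}{c-1}\cdot\frac{1-\gamma}{c-1}$, so $\mathbb{P}[r^1_v = r^2_v] = p\beta + \frac{(1-p)(1-\gamma)}{c-1}$. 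Hence $q = \frac{p\beta}{\,p\beta + (1-p)(1-\gamma)/(c-1)\,}$.

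The crucial step, which I expect to be the main obstacle, is to recognise that $\beta$ and $\gamma$ are not free parameters: the marginal accuracy of $g^2$ must also equal $p$. Applying the law of total probability over the label assigned by $g^1$ gives $\mathbb{P}[r^2_v = l(v)] = p\beta + (1-p)\gamma$, and equating this with $p$ yields the consistency constraint $\gamma = \frac{p(1-\beta)}{1-p}$. Without this relation the claim $q \ge p$ is in fact false for some $(\beta,\gamma)$, so pinning it down is the heart of the argument. Substituting it and simplifying $(1-p)(1-\gamma) = 1 - 2p + p\beta$, I would obtain the one-parameter form $q(\beta) = \frac{(c-1)p\beta}{cp\beta + (1-2p)}$, valid over the feasible range $\beta \in [p,1]$: the lower bound $\beta \ge p$ is postulated in Assumption~\ref{assumption:correlation}, while $\beta \le 1$ follows from $\gamma \ge 0$.

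For part (1) I would reduce $q \ge p$ to $\beta \ge D$, where $D$ is the denominator of $q$; after clearing the positive factors $p$ and $1-p$ and inserting the constraint, this becomes the affine inequality $h(\beta) := \beta[(c-1) - cp] - (1-2p) \ge 0$. Since $h$ is linear in $\beta$, it suffices to verify the two endpoints of $[p,1]$: $h(p) = (1-p)(cp-1) \ge 0$ because $cp \ge c/2 \ge 1$, and $h(1) = (c-2)(1-p) \ge 0$ because $c \ge 2$. Nonnegativity at both ends of the interval forces nonnegativity throughout, establishing $q \ge p$.

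For part (2) I would treat $q$ as a function of $u = p\beta$ and differentiate $q = \frac{(c-1)u}{cu + (1-2p)}$, obtaining $\frac{dq}{du} = \frac{(c-1)(1-2p)}{(cu + 1 - 2p)^2}$, whose sign is that of $1-2p$, which is strictly negative under the hypothesis $p > 1/2$. Thus $q$ is strictly decreasing in $\beta$ and is maximised at the smallest admissible value $\beta = p$; the consistency constraint then forces $\gamma = p$ as well, which is exactly the independent case, completing the proof. As a sanity check, at $\beta = \gamma = p$ the formula collapses to $q = \frac{(c-1)p^2}{(c-1)p^2 + (1-p)^2}$, the value obtained directly from two independent symmetric-error classifiers.
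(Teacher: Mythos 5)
Your proposal is correct and follows essentially the same route as the paper: the same decomposition $q=\mathbb{P}[\text{both correct}]/\mathbb{P}[\text{agree}]$, the same consistency constraint $p\beta+(1-p)\gamma=p$ reducing $q$ to $\frac{(c-1)p\beta}{cp\beta+1-2p}$, and the same sign analysis of $\partial q/\partial\beta$ for part (2). The only cosmetic difference is in part (1), where you verify the affine inequality at the endpoints of $[p,1]$ while the paper solves for the admissible range of $\beta$ and checks it contains $[0,1]$; both are equivalent elementary manipulations of the same inequality.
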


Theorem~\ref{th:label} shows that using two models improves the accuracy of $\mathcal{T}'$ over using a single model. Moreover, we should make the GNN models independent to maximize the accuracy of $\mathcal{T}'$.



\noindent \textbf{Creating diversity in GNN models}. Generating multiple different GNN models is straightforward with \textit{random initialization}, which trains the same model with different parameter initializations. We show the number of errors (i.e., nodes with wrong labels) in $\mathcal{T}'$ using random initialization with 2 models and under different threshold $\tau_c$ (adjusting $\tau_c$ controls the number of added nodes) in Figure~\ref{fig:diversity}. The results show that random initialization does not significantly outperform a single model. We found that this is because the 2 models lack diversity. To be more specific, 2 randomly initialized models provide the same label prediction for 2,900 nodes (3,327 nodes in total) on the \citeseer dataset and the prediction accuracy in these agreed nodes is 71.9\%. This phenomenon is consistent across different GNN models and datasets. It is observed that GNN models resemble label propagation in some sense~\citep{wang2020unifying} and the results of label propagation are totally determined by the graph structure and the labeled nodes. Therefore, 2 randomly initialized GNN models lack diversity because they use the same graph structure and training set.


Motivated by this finding, we propose to generate multiple GNN models with better diversity using \textit{train set swapping}, which randomly re-partitions the visible set (training and validation set, i.e., $\mathcal{T} \cup \mathcal{S}$) for each model. $|\mathcal{T}|$ nodes in the visible set are randomly selected as the training set for a model and the remaining samples go to the validation set. The motivation is to use a different training set to train each GNN model for better diversity. We also plot the errors in the $\mathcal{T}'$ produced by train set swapping with 2 models in Figure~\ref{fig:diversity}. The results show that train set swapping generates significantly fewer errors than random initialization. This is because the 2 models have better diversity than random initialization. They agree on the label prediction of only 2,230 nodes on the \citeseer dataset and the prediction accuracy in the agreed nodes is 85.4\% (71.9\% for random initialization). When implementing TNA, we also ensure \textit{class balance}, which means that each class has the same number of nodes in $\mathcal{T}'$. If the number of nodes to be added to $\mathcal{T}'$ for a class is larger than that for the smallest class, we add only the nodes with the highest confidence for this class. The motivation is to avoid biasing the model to certain classes due to class imbalance.




\section{Experimental Results}\label{sec:experiment}

\begin{table}[t]
	\caption{Dataset statistics}
	\label{table:dataset_statistics}
	\centering
	\begin{tabular}{l r r r r r r}
		\toprule
		& \textbf{Classes} & \textbf{Features} & \textbf{Nodes} & \textbf{Edges}  & $\mathbf{\alpha}\,\,$ \\
		\midrule
		\textbf{\cora}         &            7 &          1,433 &       2,708 &       5,278 & 0.19  &    \\
		\textbf{\citeseer}     &            6 &          3,327 &       4,552 &       3,668 & 0.26  &    \\
		\textbf{\pubmed}       &            3 &           500 &      19,717 &      44,324  & 0.19  &   \\
		\textbf{CS}        &           15 &          6,805 &      18,333 &      81,894 & 0.19  &    \\
		\textbf{Physics}        &            5 &          8,415 &      34,493 &     247,962 & 0.06  &    \\
		\textbf{Computers}       &           10 &           767 &      13,752 &     245,861 & 0.22  &    \\
		\textbf{Photo}      &            8 &           745 &       7,650 &     119,081  & 0.17  &   \\
		\bottomrule
	\end{tabular}
\end{table}

\subsection{Experiment settings}

Seven popular GNN benchmark datasets were used in the experiments and we list the satistics and \textit{noise ratio} $\alpha$ of them in Table~\ref{table:dataset_statistics}. Among them, \cora, \citeseer and \pubmed are 3 well known citation networks  and we used the version provided by \cite{yang2016revisiting}. \amcomp and \amphoto are derived from the Amazon co-purchase graph in \citep{mcauley2015image}. \magcs and \magph are obtained from the Microsoft Academic Graph for the KDD Cup 2016 challenge\footnote{https://www.kdd.org/kdd-cup/view/kdd-cup-2016}. For these 4 datasets, we used the version pre-processed by \cite{shchur2019pitfalls}.  We evaluated our methods on 3 popular GNN models, i.e., GCN~\citep{kipf2017semi}, GAT~\citep{velickovic2018graph} and SGC~\citep{felix2019simplifying}. All the three models are configured to have two layers because GNN models usually perform the best with two layers~\citep{oono2020graph}. Weights in the models were initialized according to~\citep{pmlr-glorot10a} and the biases were initialized as zeros. The models were trained using the Adam optimizer~\citep{kingma2014adam} and the learning rate was set to 0.01. For both TU and TNA, we utilized a grid search to tune their parameters (i.e., the thresholds $\tau_d$, $\tau_a$ and $\tau_c$) according to classification accuracy on \textit{the validation set}. 


We followed the evaluation protocol proposed by~\citep{shchur2019pitfalls} and recorded the average classification accuracy and standard deviation of 10 different dataset splits. For each split, 20 and 30 nodes from each class were randomly sampled as the training set and validation set, respectively, and the other nodes were used as the test set. Under each split, we ran 10 random initializations of the model parameters and used the average accuracy of the 10 initializations as the performance of this split. This evaluation protocol excludes the influence of data split on the performance, which was found to be significant. All the models and algorithms in the experiments are implemented on PyTorch~\cite{pytorch} and PyTorch-Geometric~\cite{pytorch-geometric}.\footnote{Code is available at \url{https://github.com/yang-han/Self-Enhanced_GNN} .} More implementation details can be found in Section D of the supplementary material.



\begin{table*}[t]
	\caption{Performance results of self-enhanced GNN (SEG), where \textit{Error Reduction} is the percentage of classification error reduced from the respective baseline model}
	\label{tab:overall}
	\centering
	\begin{tabular}{ lccccccc }
		\toprule
		& \textbf{\cora}& \textbf{\citeseer}& \textbf{\pubmed}  & \textbf{CS}   & \textbf{Physics}   & \textbf{Computers}  &\textbf{Photo}\\
\midrule
		\textbf{GCN}                            & 78.7\PM{1.5}  & 66.5\PM{2.4}  & 75.5\PM{1.8}  & 90.7\PM{0.6}  & 93.1\PM{0.5}  & 71.9\PM{12.8}  & 85.2\PM{10.0}   \\
		\textbf{GCN+SEG} 			    & 82.3\PM{1.2}	&	71.1\PM{0.8}	&	80.0\PM{1.4}	&	92.9\PM{0.4}	&	93.9\PM{0.2}	&	80.2\PM{6.5}	&	90.4\PM{0.9} \\
		\textbf{Error Reduction}     &\BF{16.9\%}	&	\BF{13.7\%}	&	\BF{18.4\%}	&	\BF{23.7\%}	&	\BF{11.6\%}	&	\BF{29.5\%}	&	\BF{35.1\%} \\
		
		\midrule
		\textbf{GAT}                                        & 79.0\PM{1.7}  & 65.7\PM{1.9}  & 75.3\PM{2.4}  & 89.9\PM{0.6}  & 92.0\PM{0.8}  & 82.2\PM{2.1}  & 89.6\PM{1.8}      \\
		\textbf{GAT+SEG} &81.4\PM{1.3}	&	70.0\PM{1.0}	&	78.9\PM{1.4}	&	91.6\PM{0.5}	&	93.5\PM{0.4}	&	83.7\PM{0.7}	&	90.8\PM{1.4} \\
		\textbf{Error Reduction}& \BF{11.4\%}	&	\BF{12.3\%}	&	\BF{14.6\%}	&	\BF{16.8\%}	&	\BF{18.8\%}	&	\BF{8.4\%}	&	\BF{11.5\%} \\
		
		\midrule
		\textbf{SGC}                                          & 77.4\PM{2.6}  & 65.0\PM{2.0}  & 73.3\PM{2.6}  & 91.3\PM{0.6}  & 93.3\PM{0.3}  & 81.1\PM{2.0}  & 89.3\PM{1.4}     \\
		\textbf{SGC+SEG} & 82.2\PM{1.3}	&	70.2\PM{0.9}	&	78.1\PM{2.3}	&	93.1\PM{0.2}	&	94.1\PM{0.4}	&	82.8\PM{1.7}	&	89.9\PM{0.8} \\
		\textbf{Error Reduction} & \BF{21.2\%}	&	\BF{14.9\%}	&	\BF{8.5\%}	&	\BF{16.8\%}	&	\BF{18.8\%}	&	\BF{9.0\%}	&	\BF{7.7\%} \\
		\bottomrule
	\end{tabular}
\end{table*}




\subsection{Overall Performance Results}

We first report the overall performance results of self-enhanced GNN (\textbf{SEG}) in Table~\ref{tab:overall}. The reported performance of SEG is the best performance that can be obtained using TU, TNA, or (TU + TNA). In practice, we may choose to use TU, TNA, or (TU + TNA) by their prediction accuracy on the validation set. The results in Table~\ref{tab:overall} show that SEG consistently improves the performance of the 3 GNN models on the 7 datasets, where the reduction in classification error is 16.2\% on average and can be as high as 35.1\%. The result is significant particularly because it shows that SEG is an effective, general framework that improves the performance of well-known GNN models that are already recognized to be effective. In the subsequent subsections, we analyze the performance of TU and TNA individually, as well as examine how they influence data quality.

\begin{table*}[t]
	\caption{Performance results of TU, \textit{Delete} refers to edge deletion, \textit{Add} refers to edge addition, and \textit{Modify} refers to conducting both edge deletion and addition, best-performing variant for each model marked in bold}
	\label{table:topology update}
	\centering
		\begin{tabular}{ @{}lccccccc@{}}
			\toprule
			& \textbf{\cora}& \textbf{\citeseer}& \textbf{\pubmed}  & \textbf{CS}   & \textbf{Physics}   & \textbf{Computers}  &\textbf{Photo}\\
			\midrule
			%
			\textbf{GCN}                            & 78.7\PM{1.5}  & 66.5\PM{2.4}  & 75.5\PM{1.8}  & 90.7\PM{0.6}  & 93.1\PM{0.5}  & 71.9\PM{12.8}  & 85.2\PM{10.0}   \\
			\textbf{GCN+Delete}               & 79.2\PM{1.6}  & \underline{66.5\PM{2.4}}  & 75.6\PM{2.0}  & \textbf{91.8\PM{0.6}}  & 93.2\PM{0.6}  & \textbf{80.1\PM{2.1}}  & \textbf{89.0\PM{2.5}}     \\
			\textbf{GCN+Add}                 & 78.8\PM{1.7}  & 66.8\PM{2.4}  & 75.6\PM{1.7}  & \underline{90.7\PM{0.7}}  & 93.2\PM{0.4}  & 78.9\PM{2.2}  & 88.2\PM{2.3}    \\
			\textbf{GCN+Modify}                & \textbf{79.4\PM{1.3}}  & \textbf{67.1\PM{2.2}}  & \textbf{75.9\PM{2.0}}  & 91.7\PM{0.9}  & \textbf{93.4\PM{0.3}}  & 79.2\PM{2.5}  & 88.5\PM{4.0}   \\
			\midrule
			\textbf{GAT}                                        & 79.0\PM{1.7}  & 65.7\PM{1.9}  & 75.3\PM{2.4}  & 89.9\PM{0.6}  & 92.0\PM{0.8}  & 82.2\PM{2.1}  & 89.6\PM{1.8}      \\
			\textbf{GAT+Delete}                & \textbf{79.3\PM{1.8}}  & \textbf{65.8\PM{1.9}}  & \underline{75.3\PM{2.6}}  & \textbf{90.9\PM{0.9}}  & \textbf{92.2\PM{0.7}}  & \textbf{82.8\PM{2.1}}  & \textbf{90.3\PM{1.5}}  \\
			\textbf{GAT+Add}                 & 79.1\PM{1.3}  & \underline{65.7\PM{2.0}}  & 75.7\PM{1.8}  & 90.0\PM{0.5}  & 92.1\PM{0.8}  & 82.6\PM{2.5}  & 89.7\PM{0.8}  \\
			\textbf{GAT+Modify}                & 79.1\PM{1.8}  & 65.8\PM{2.1}  & \textbf{76.0\PM{2.2}}  & 90.7\PM{0.9}  & 92.1\PM{0.9}  & 82.4\PM{2.0}  & 90.1\PM{1.4}   \\
			\midrule
			\textbf{SGC}                                          & 77.4\PM{2.6}  & 65.0\PM{2.0}  & 73.3\PM{2.6}  & 91.3\PM{0.6}  & 93.3\PM{0.3}  & 81.1\PM{2.0}  & 89.3\PM{1.4}     \\
			\textbf{SGC+Delete}                & 77.8\PM{2.1}  & 65.5\PM{2.4}  & 73.6\PM{2.7}  & 92.6\PM{0.4}  & 93.5\PM{0.4}  & \textbf{82.0\PM{2.0}}  & \textbf{89.6\PM{1.4}} \\
			\textbf{SGC+Add}                 & 77.5\PM{2.4}  & 65.7\PM{1.7}  & 73.8\PM{2.5}  & 91.5\PM{0.6}  & 93.5\PM{0.4}  & 81.6\PM{1.9}  & 89.4\PM{1.5}  \\
			\textbf{SGC+Modify}                 & \textbf{78.5\PM{2.3}}  & \textbf{66.7\PM{1.6}}  & \textbf{74.0\PM{2.6}}  & \textbf{92.7\PM{0.3}}  & \textbf{93.5\PM{0.3}}  & 81.7\PM{2.2}  & 89.4\PM{1.6}  \\
			\bottomrule
		\end{tabular}
\end{table*}

\subsection{Results for Topology Update}
\begin{table}[t]
	\caption{The effect of edge deletion and edge addition on noise ratio for \cora. For edge deletion, the reported tuple is the number of deleted inter-class edges and intra-class edges, respectively. For edge addition, the reported tuple is the number of added intra-class edges and inter-class edges, respectively. The noise ratio $\alpha$ of the original graph is 19.00\%.}
	\label{tab:noise ratio analysis}
	\centering
	\resizebox{\linewidth}{!}{
			\begin{tabular}{ccccccc}
				\toprule
				Model & Edge Deletion & $\alpha$ after Deletion & Edge Addition & $\alpha$ after Addition \\
				\midrule
				GCN & (332, 218) & 14.19\% & (4692, 85) & 10.82\% \\
				\midrule
				GAT & (309, 212) & 14.59\% & (5995, 165) & 10.21\% \\
				\midrule
				SGC &  (242, 116) & 15.47\% & (3807, 25) & 11.28\% \\
				\bottomrule
			\end{tabular}
	}
\end{table}

\begin{figure}[t]	
	\centering
	\begin{subfigure}{0.49\linewidth}
		\centering 
		\includegraphics[width=\textwidth]{./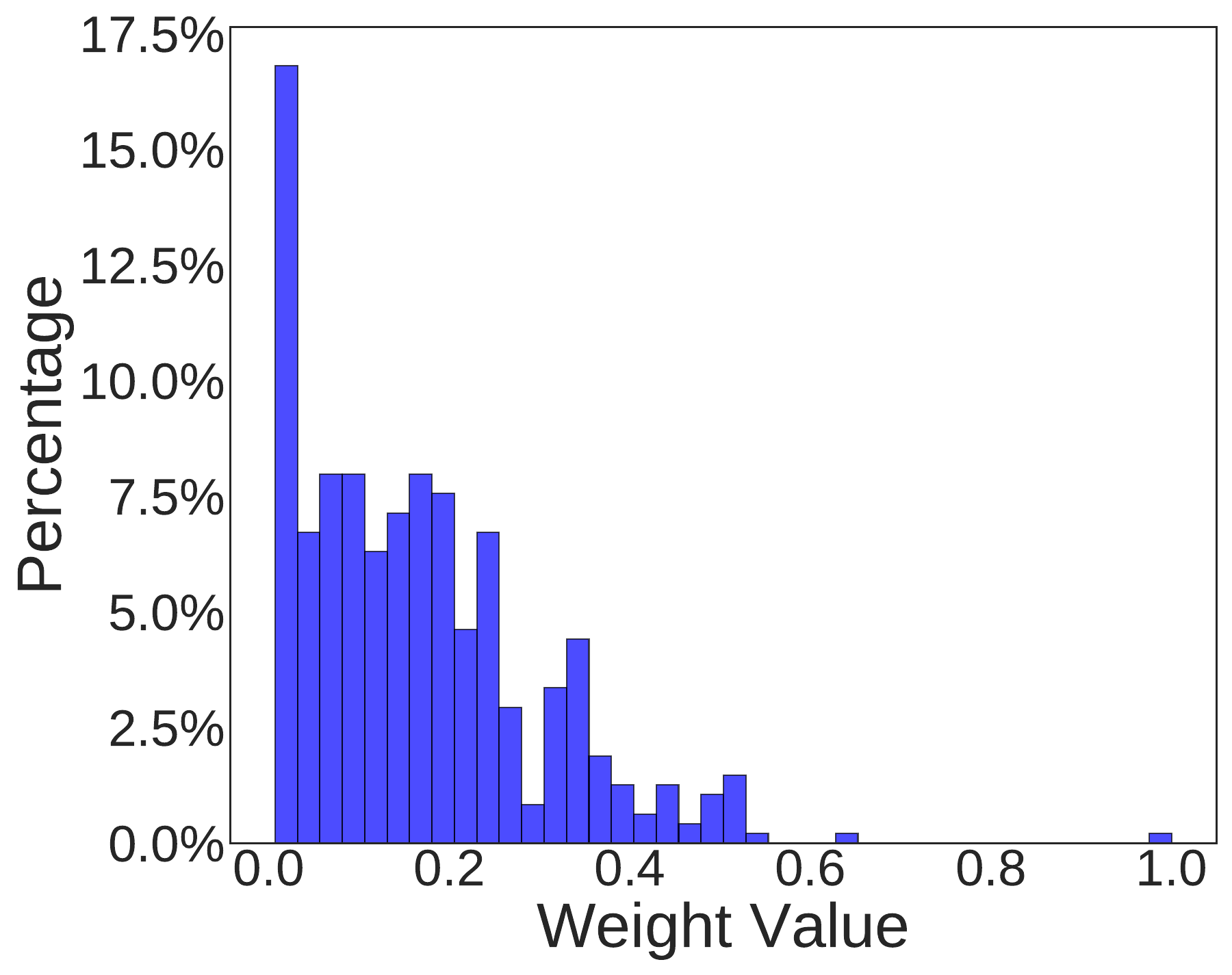} 
		\caption{Deleted edges (mean=0.17)}
		\label{ig:weight of deleted edges}
	\end{subfigure}
	\begin{subfigure}{0.49\linewidth}
		\centering 
		\includegraphics[width=\textwidth]{./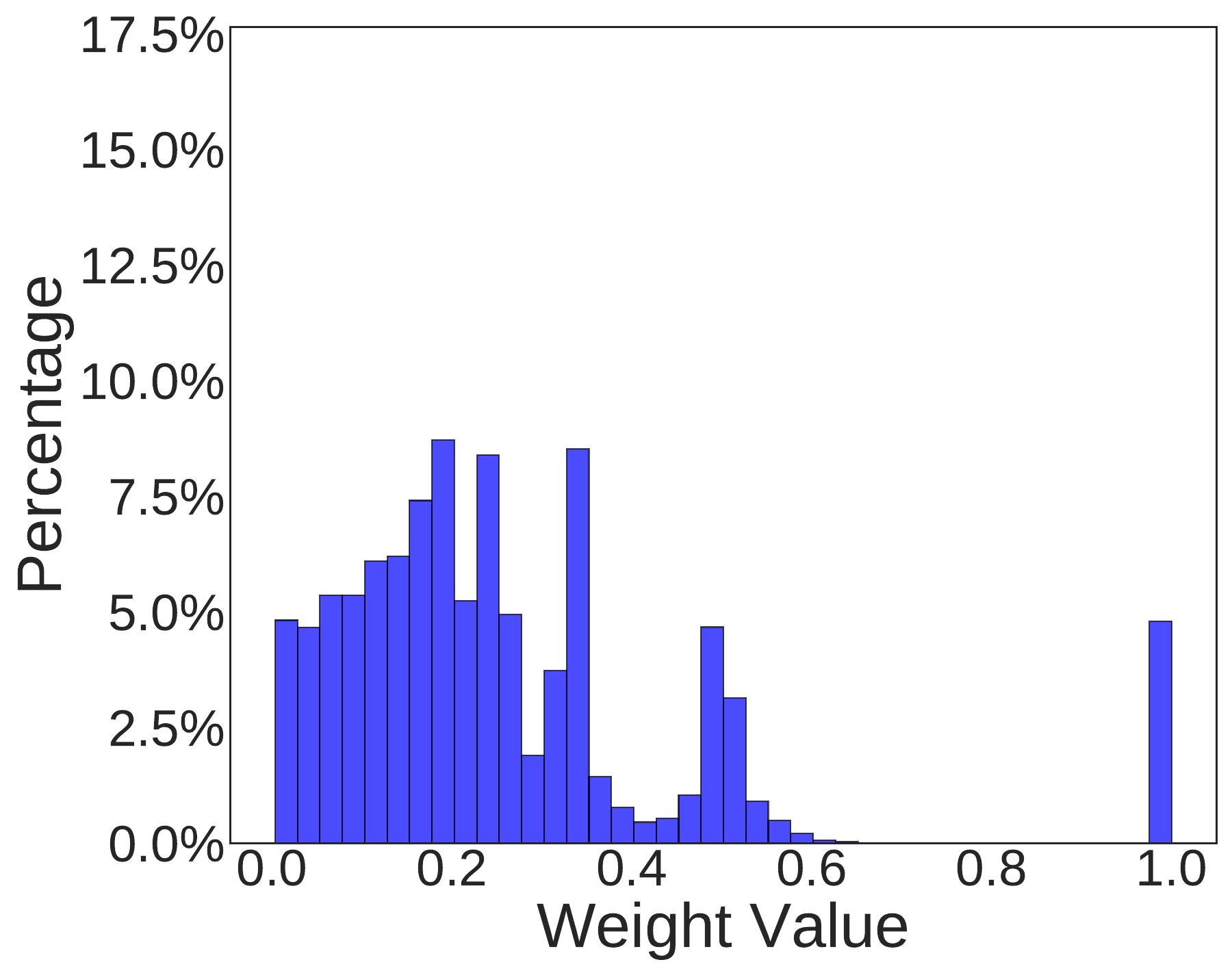} 
		\caption{Kept edges (mean=0.26)}
		\label{fig:weight of kept edges}
	\end{subfigure}
	\caption{The distribution of GAT attention weights on the edges that are deleted and kept by Algorithm~\ref{alg:edge deletion} on  \cora} 
	\label{fig:correlation with GAT}
\end{figure}

The performance results for three variants of TU (i.e., \textit{Delete}, \textit{Add} and \textit{Modify}) are reported in Table~\ref{table:topology update}. To control the complexity of parameter search, we constrained the number of added edges to be the same as deleted edges for \textit{Modify}. We make the following observations from the results in Table~\ref{table:topology update}.

\begin{table*}[t]
	\caption{Performance results of TNA, best-performing variant for each model marked in bold}
\label{table:TNA}
\centering
\begin{tabular}{ lccccccc }
\toprule
& \textbf{\cora}& \textbf{\citeseer}& \textbf{\pubmed}  & \textbf{CS}   & \textbf{Physics}   & \textbf{Computers}  &\textbf{Photo}\\
\midrule
\textbf{GCN}                            & 78.7\PM{1.5}  & 66.5\PM{2.4}  & 75.5\PM{1.8}  & 90.7\PM{0.6}  & 93.1\PM{0.5}  & 71.9\PM{12.8}  & 85.2\PM{10.0}   \\
\textbf{GCN+TNA}                 & \textbf{82.1\PM{1.1}}  & \textbf{70.6\PM{1.1}}  & \textbf{80.0\PM{1.4}}  & \textbf{91.8\PM{0.3}}  & \textbf{93.7\PM{0.5}}  & \textbf{80.2\PM{6.5}}  & \textbf{89.5\PM{2.6}}  \\
\textbf{Ensemble} & 78.8\PM{1.0}  & 67.2\PM{1.9}  & 76.1\PM{1.9}  & 90.9\PM{0.6}  & 92.8\PM{0.3}  & 79.5\PM{2.8}  & 88.1\PM{1.9} \\
\textbf{Distillation}  & 80.8\PM{2.9}  & 63.9\PM{2.5}  & 79.2\PM{2.6}  & 90.3\PM{0.8}  & 92.9\PM{2.0}  & 75.1\PM{15.5}  & 84.3\PM{12.0} \\
\midrule
\textbf{GAT}                                        & 79.0\PM{1.7}  & 65.7\PM{1.9}  & 75.3\PM{2.4}  & 89.9\PM{0.6}  & 92.0\PM{0.8}  & 82.2\PM{2.1}  & 89.6\PM{1.8}      \\
\textbf{GAT+TNA}                  & \textbf{81.4\PM{1.3}}  & \textbf{70.0\PM{1.0}}  & 78.9\PM{1.4}  & \textbf{91.1\PM{0.4}}  & \textbf{93.4\PM{0.3}}  & 82.7\PM{1.7}  & \textbf{90.8\PM{1.4}}  \\		
\textbf{Ensemble}  & 77.1\PM{1.3}  & 65.2\PM{2.0}  & 75.0\PM{1.8}  & 89.8\PM{0.8}  & 92.0\PM{1.0}  & 80.9\PM{2.1}  & 89.4\PM{1.9} \\
\textbf{Distillation} & 80.5\PM{2.7}  & 63.0\PM{3.1}  & \textbf{79.4\PM{3.9}}  & 89.0\PM{0.8}  & 91.9\PM{2.1}  & \textbf{87.7\PM{1.6}}  & 90.5\PM{1.3} \\
\midrule
\textbf{SGC}                                          & 77.4\PM{2.6}  & 65.0\PM{2.0}  & 73.3\PM{2.6}  & 91.3\PM{0.6}  & 93.3\PM{0.3}  & 81.1\PM{2.0}  & 89.3\PM{1.4}     \\
\textbf{SGC+TNA}                & \textbf{82.2\PM{1.3}}  & \textbf{70.2\PM{0.9}}  & 73.3\PM{3.2}  & \textbf{92.0\PM{0.4}}  & \textbf{93.9\PM{0.3}}  & 82.8\PM{1.7}  & 89.9\PM{1.5}    \\
\textbf{Ensemble}  & 79.2\PM{1.3}  & 66.6\PM{1.5}  & 75.1\PM{1.8}  & 91.5\PM{0.5}  & 93.2\PM{0.2}  & 81.4\PM{1.7}  & 89.1\PM{1.7} \\
\textbf{Distillation}  & 80.9\PM{2.9}  & 63.0\PM{2.5}  & \textbf{76.3\PM{4.1}}  & 91.0\PM{0.7}  & 92.5\PM{2.2}  & \textbf{86.7\PM{2.9}}  & \textbf{90.0\PM{1.8}} \\
\bottomrule
\end{tabular}
\end{table*}



Firstly, TU improves the performance of GCN, GAT and SGC in most cases and the improvement is significant in some cases. For example, the accuracy increases by 8.2\% for GCN on the Amazon Computers dataset. There is no improvement in 4 out of the 63 cases (underlined) because threshold-based tuning (for $\tau_d$ and $\tau_a$) on the validation set rejects TU as it cannot improve the performance. 
Secondly, edge deletion generally achieves greater performance improvements than edge addition. This is because there is a large number of possible inter-class edges (e.g., $n^2/c$ if the classes are balanced) and \textit{Add} may introduce a considerable number of inter-class edges in expectation. 
Thirdly, the performance improvement of TU is smaller for CiteSeer and PubMed than for the other datasets. This is because the accuracy of all three models is considerably lower for CiteSeer and PubMed than for the other datasets. Thus, the TU algorithms are more likely to make wrong decisions (i.e., deleting intra-class edges or adding inter-class edges) as TU decisions are guided by the model predictions. Fourth, although the baseline models have high accuracy for both Coauthor CS and Coauthor Physics, TU has considerably greater performance improvements on Coauthor CS than on Coauthor Physics. This can be explained as the noise ratio of the original Coauthor Physics graph is much lower than the Coauthor CS graph  (6\% vs. 19\%), and thus reducing noise ratio has smaller influence on the performance for Coauthor Physics.

Detailed profiling finds that the attention weights of GAT perform a role similar to edge deletion, which we illustrate in Figure~\ref{fig:correlation with GAT}. The results show that the GAT attention weights on the deleted edges are significantly smaller than the kept edges, which reduces the influence of the inter-class edges. This suggests that GAT should be less sensitive to changes in noise ratio. However, GAT cannot really set the weights of the inter-class edges to 0 as it uses the \textit{softmax} function to compute attention weights. In contrast, edge deletion removes inter-class edges and thus improves GAT in most cases. 

We also examined the edge deletion and addition decisions made by TU in Table~\ref{tab:noise ratio analysis}. For both edge deletion and addition, we report the number of correct decisions (i.e., removing inter-class edges for deletion and adding intra-class edges for addition) and wrong decisions (i.e., removing intra-class edges for deletion and adding inter-class edges for addition), and the noise ratio of the \cora graph after TU. The results show that TU effectively reduces noise ratio. Most of the added edges are intra-class edges and only a few are inter-class edges. Edge deletion effectively removes inter-class edges but a considerable number of intra-class edges are also removed. This is because there are much more intra-class edges in the graph than inter-class edges. The probability of removing an intra-class edge is actually low.

\subsection{Results for Training Node Augmentation}

\begin{table}[t]
	\caption{The number of nodes added into $\mathcal{T}'$ by TNA and the number of errors (nodes with wrong label) in these added nodes for the \cora dataset}
	\label{tab:node addition}
	\centering
		\begin{tabular}{cccc}
			\toprule
			Model & GCN & GAT & SGC  \\
			\midrule
			\# Added Nodes & 826      & 714 & 637  \\
			\midrule
			\# Errors  & 83 & 72 & 45 \\
			\midrule
			Error Ratio & 10.05\% & 10.08\% & 7.06\%\\  
			\bottomrule
		\end{tabular}
\end{table}

We present the performance results of TNA (using 2 models) in Table~\ref{table:TNA}. As TNA uses pre-trained models, we introduce two additional baselines that also use pre-trained models. \textit{Ensemble}~\citep{dietterich2000ensemble} averages the outputs of two models to make classification decision while \textit{Distillation}~\citep{hinton2015distilling} generates soft labels for the train set and retrains the model to fit the soft labels. The results show that TNA improves the performance of the baseline original models in 20 out of the 21 cases (except SGC on PubMed). The accuracy improvements are significant in many cases, e.g., 4.3\% for GCN on the \amphoto dataset.  The performance improvements are large on \cora and \citeseer for all three GNN models. We conjecture that this is because the two datasets are relatively smaller and thus adding more training samples has a large impact on the performance.   When compared with \textit{Ensemble} and \textit{Distillation}, TNA also performs better in most cases.   Looking into the good performance of TNA, we examined the number of added nodes and the errors made by TNA in $\mathcal{T}'$ in Table~\ref{tab:node addition}.   The results show that most of the added nodes are assigned the correct label. SGC added a smaller number of nodes are added compared with GAT and GCN, and the error ratio is also lower. This may be because the model of SGC is simpler than GAT and GCN (without nonlinearity) and thus SGC is more sensitive to noise in the training samples.

\begin{figure}[t]	
	\centering
	\begin{subfigure}{0.49\linewidth}
		\centering 
		\includegraphics[width=\textwidth]{./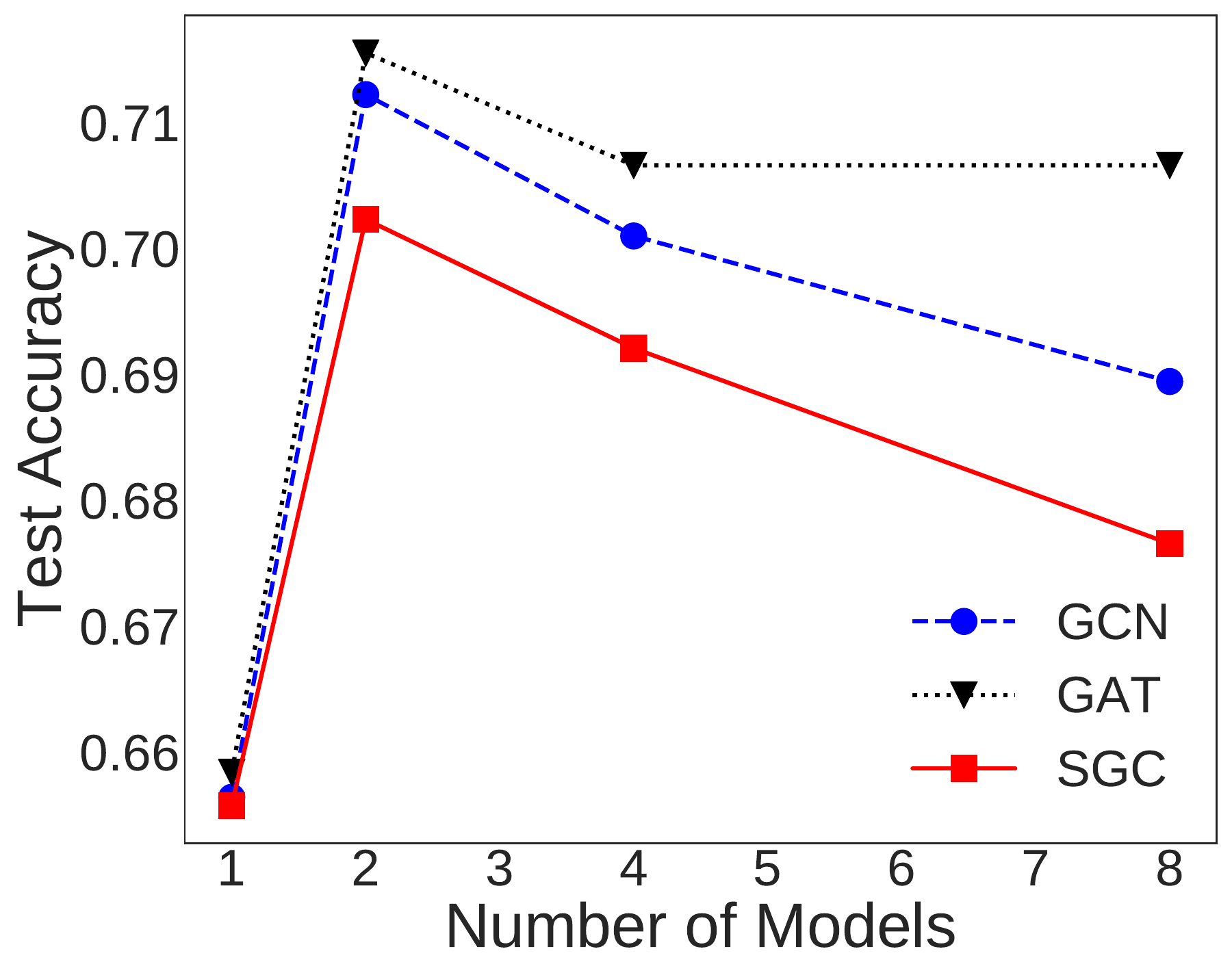} 
		\caption{Number of models}
		\label{fig:number of models}
	\end{subfigure}
	\begin{subfigure}{0.49\linewidth}
	\centering 
	\includegraphics[width=\textwidth]{./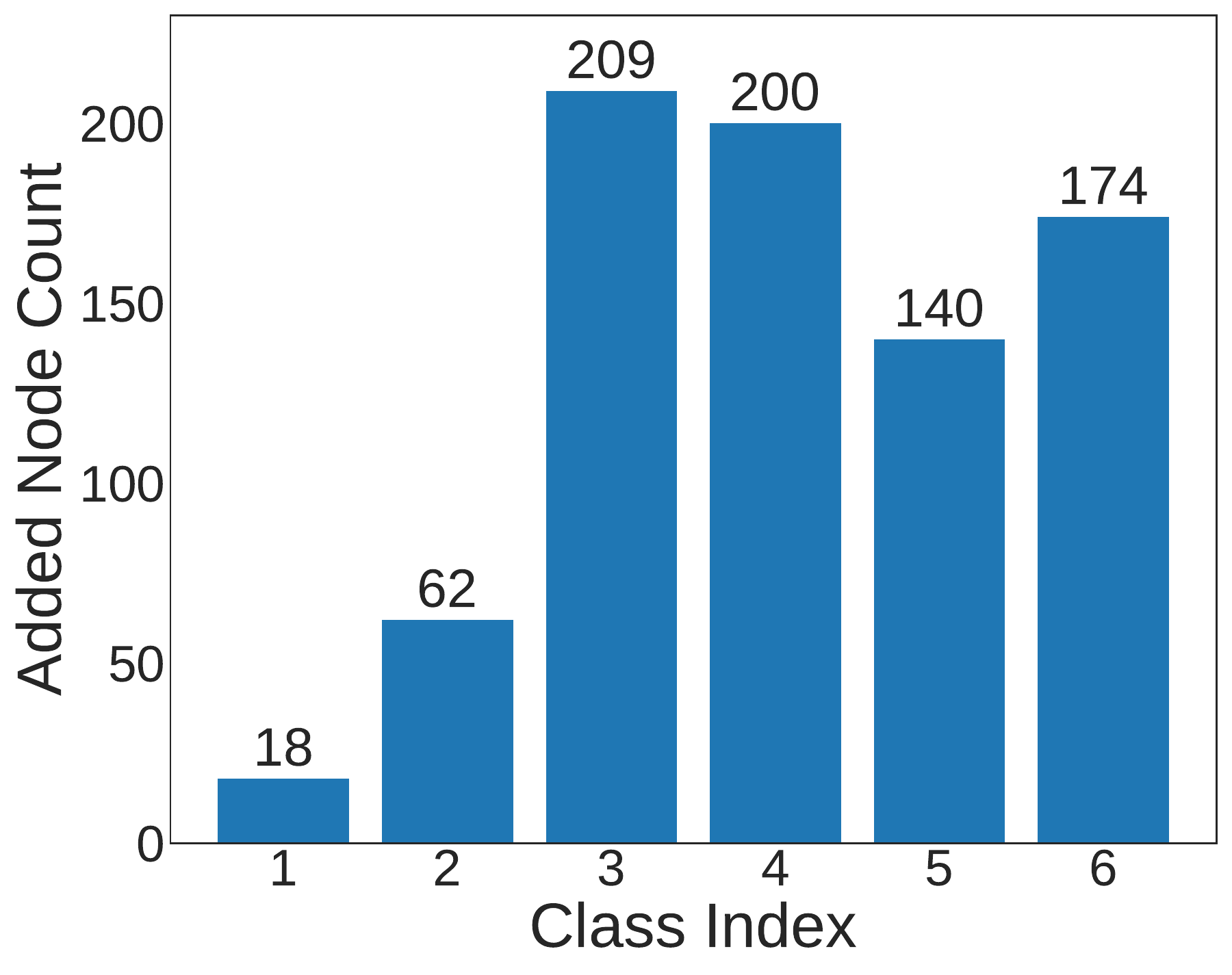} 
	\caption{Without class balance}
	\label{fig:without class balance}
	\end{subfigure}
	\caption{Examination of the designs in TNA on \citeseer} 
	\label{fig:training node augmentation}
\end{figure}


To demonstrate the benefits of using the diversity of multiple models in TNA, we report the relation between the test accuracy and the number of models (used for node selection) on the CiteSeer dataset in Figure~\ref{fig:number of models}. The result show that using 2 models provides a significant improvement in classification accuracy over 1 model, but the improvement drops when using more models. This is because more models are difficult to agree with each other and thus a low confidence threshold (i.e., $\tau_c$) needs to be used to add a good number of nodes. However, a low confidence threshold means that the added nodes are likely to contain errors. We also experimented with a version of TNA \textit{without class balance} for GCN on the Amazon Photo dataset, which records a classification accuracy of 86.6\%. In contrast, the classification accuracy with class balance is 89.5\% in Table~\ref{table:TNA}. We plot in Figure~\ref{fig:without class balance} the class distribution of the nodes added by TNA without class balance, which shows that the number of nodes in the largest class is 11.6 times of the smallest class. Thus, without class balance, the enlarged training set can be highly screwed, which leads to the model to favor certain classes.



\section{Conclusions}

We presented self-enhanced GNN as a general framework for co-training and self-training of GNNs to improve the quality of the input data using the outputs of existing GNN models. Two algorithms were developed in this framework, i.e., \textit{topology update}, which tries to reduce the noise ratio in the graph edges, and \textit{training node augmentation}, which enlarges the training set using pseudo labels. Many practical designs are explored and adopted in  our SEG framework, and theoretical analyses were provided to motivate and support the designs. Our experiments validated that SEG is an effective framework that consistently improves the performance of existing GNN models. We believe the SEG framework can inspire more research to pay attention to the \textit{data quality} in GNN,  and develop more algorithms for updating the graph topology and generating reliable pseudo labels for nodes to help self-training and co-training of GNNs.

\bibliographystyle{IEEEtran}
{\small
\bibliography{han}
}


\onecolumn
\appendix

{\centering\huge\bf Supplementary Materials for Self-Enhanced GNN\par}


\section{Proofs of the Theorems}

\subsection{Theorem~\ref{th:deletion} in Section~\ref{sec:topology update}}

\begin{proof}
	The probability that an intra-class edge in $\mathcal{G}$ is kept in $\mathcal{G}'$ by Algorithm~\ref{alg:edge deletion} is $p_a=\mathbb{P}\left[f(v)=f(u)|l(u)=l(v)\right]=p^2+\frac{(1-p)^2}{c-1}$. Therefore, $m_{a}=(1-\alpha) m \left(p^2+\frac{(1-p)^2}{c-1}\right)$, where $m$ is the number of edges in $\mathcal{G}$. The probability that an inter-class edge is kept is
	$p_r=\mathbb{P}\left[f(v)=f(u)|l(u)\neq l(v)\right]=\frac{2p(1-p)}{c-1}+\frac{(c-2)(1-p)^2}{(c-1)^2}$, and thus $m_{r}=\alpha m\left(\frac{2p(1-p)}{c-1}+\frac{(c-2)(1-p)^2}{(c-1)^2}\right)$. We have
	\begin{equation*}
	\begin{aligned}
	\alpha_E&=\frac{\alpha m\left(\frac{2p(1-p)}{c-1}+\frac{(c-2)(1-p)^2}{(c-1)^2}\right)}{\alpha m\left(\frac{2p(1-p)}{c-1}+\frac{(c-2)(1-p)^2}{(c-1)^2}\right)+(1-\alpha) m \left(p^2+\frac{(1-p)^2}{c-1}\right)}\\
	&<\frac{\alpha(1-p^2)}{\alpha(1-p^2)+(1-\alpha)[(c-1)p^2+(1-p)^2]}.
	\end{aligned}
	\end{equation*}  
	Solving $\frac{\alpha(1-p^2)}{\alpha(1-p^2)+(1-\alpha)[(c-1)p^2+(1-p)^2]}<\alpha$ gives $(1-\alpha)p[(c+1)p-2]\ge 0$, which is satisfied when $p>\frac{2}{c+1}$.    
\end{proof}

\subsection{Theorem~\ref{th:addition} in Section~\ref{sec:topology update}}

\begin{proof}	 
	Denote the expected number of added intra-class edges as $m'_{a}$ and the expected number of added inter-class edges as $m'_{r}$. To ensure $\alpha_{E}<\alpha$, it suffices to show that $\frac{m'_{r}}{m'_{a}+m'_{r}} < \alpha$. As there are $\frac{c-1}{c} n^2$ possible inter-class edges and $\frac{1}{c} n^2$ intra-class edges in $\mathcal{V} \times \mathcal{V}$, we have
	\begin{equation*}
	\begin{aligned}
	m'_{r}&=(\frac{c-1}{c} n^2-m \alpha)p_{r}<\frac{c-1}{c} n^2 p_{r}\\
	m'_{a}&=\left[\frac{1}{c} n^2-m (1-\alpha)\right]p_{a}>\frac{1}{c} n^2 p_{a}-m,
	\end{aligned} 
	\end{equation*} 
	where $p_{r}$ and $p_{a}$ are the probability of keeping an inter-class edge and an intra-class edge in $\mathcal{G}'$, respectively. Their expressions are given in the proof of Theorem~\ref{th:deletion}. The $m \alpha$ and $m (1-\alpha)$ terms are included to exclude the overlaps between the edges in the original graph and the edges that may be added by Algorithm~\ref{alg:edge addition}. With $m=n^2 \lambda$, we have  
	\begin{equation*}
	\frac{m'_{r}}{m'_{a}+m'_{r}}<\frac{\frac{c-1}{c} n^2 p_{r}}{\frac{c-1}{c} n^2 p_{r}+\frac{1}{c} n^2 p_{a}-m}<\frac{1-p^2}{1+\frac{(1-p)^2}{c-1}-c\lambda}.
	\end{equation*} 
	Solving $\frac{1-p^2}{1+\frac{(1-p)^2}{c-1}-c\lambda}<\alpha$ gives the result. 
\end{proof}

\subsection{Theorem~\ref{th:label} in Section~\ref{sec:training node augmentation}}

\begin{proof}
	The probability that $g^2$ gives the right label can be expressed as 
	\begin{equation*}
	\begin{aligned}
	\mathbb{P}[r^2_v=l(v)]=&\mathbb{P}[r^1_v=l(v)]\cdot\mathbb{P}[r^2_v=l(v)|r^1_v=l(v)]\\
	&+\sum_{k\neq l(v)}\mathbb{P}[r^2_v=l(v)|r^1_v=k]\cdot \mathbb{P}[r^1_v=k].
	\end{aligned}
	\end{equation*} 
	We assume that $g^2$ has a classification accuracy of $p$ and solving $\mathbb{P}[r^2_v=l(v)]=p$ gives the relation between $\beta$ and $\gamma$ as $p\gamma=p\beta+\gamma-p$. We can express $q$ as
	\begin{equation*}
	\begin{aligned}  
	q&=\frac{\mathbb{P}[r^1_v=l(v), r^2_v=l(v)]}{\mathbb{P}[r^1_v=l(v), r^2_v=l(v)]+\sum_{k\neq l(v)}\mathbb{P}[r^1_v=k, r^2_v=k]}\\
	&=\frac{(c-1)p\beta}{(c-1)p\beta+(1-p)(1-\gamma)}. 
	\end{aligned}
	\end{equation*}
	Substituting $p\gamma=p\beta+\gamma-p$ into the above expression gives $q=\frac{(c-1)p\beta}{cp\beta+1-2p}$. Solving $q\ge p$ gives the following result
	\begin{equation*}
	\begin{cases}
	\beta\ge 0 \quad \quad \quad \quad \quad \text{for} \quad p\le 1-\frac{1}{c}\\
	0 \le \beta\le \frac{1-2p}{c-1-cp} \quad \text{for} \quad p> 1-\frac{1}{c}
	\end{cases}
	.
	\end{equation*}    
    It can be verified that $\frac{1-2p}{c-1-cp}\ge 1$ when $1-\frac{1}{c}<p\le 1$. Therefore, we have $q\ge p$ regardless of the value of $p$ and $\beta$, which proves the first part of the theorem. 
    For the second part of theorem, we have 
    \begin{equation*}
    \frac{\partial q}{\partial \beta}=\frac{p (c-1)(1-2p)}{(cp\beta+1-2p)^2}.
    \end{equation*}
    As $p>1/2$, $q$ is a decreasing function of $\beta$. As $\beta\ge p$, $q$ is maximized when $\beta=p$. In this case, we can obtain $\gamma=p$ by solving $p\gamma=p\beta+\gamma-p$. $\beta=\gamma=p$ shows that $\mathbb{P}[r^2_v=l(v)]$ does not depend on $r^1_v$, which means that the two models are independent.    
\end{proof}

\section{Details of Experimental Evaluation}
\label{sec:exp_details}

\subsection{Implementation Details}
\label{sec:imp_details}

\paragraph{Evaluation protocol.} To eliminate the influence of random factors and ensure that the performance comparison is fair, we adopted the evaluation protocol provided by \cite{shchur2019pitfalls}. A 20/30/rest split for train/val/test set was used for all the datasets. In the experiments, we evaluated each model on 10 randomly generated dataset splits, and under each split, we ran the model for 10 times using different random seeds. We reported the mean value and standard deviation of the test accuracies  across the 100 runs for each model on each dataset. For the experiments comparing Self-Enhanced GNN with the base GNN models (i.e., GCN, GAT and SGC), all model implementation and evaluation settings were kept fixed and identical.




\paragraph{Structure of the base models}. Our GCN model implementation has 2 GCN convolutional layers with a hidden size of 16. The activation function is \textit{ReLU}. A dropout layer with a dropout rate of 0.5 is used after the first GCN layer. Our GAT model implementation has 2 GAT layers with an attention coefficient dropout probability of 0.6. The first layer is an 8-heads attention layer with a hidden size of 8. The second layer has a hidden size of $8 \times 8$. The activation function is \textit{ELU} . Two dropout layers with a dropout rate of 0.6 are used between the input layer and the first GAT layer, and between the first GAT layer and the second GAT layer. Our SGC model implementation has an SGC convolutional layer with 2 hops (equivalent to 2 SGC layers according to the SGC definition).  


\paragraph{Model training.} We used the Adam optimizer~\citep{kingma2014adam} with a learning rate of 0.01 and an $L_2$ regularization coefficient of $5e^{-4}$. We did not use learning rate decay and early stopping. As the difficulty of model training varies for different datasets, we used a different number of training epochs for each dataset, i.e., 400 epochs for \cora, \citeseer, \pubmed and \magph, 1000 epochs for \amcomp, 2000 epochs for \amphoto and \magcs. 

\paragraph{Software.} All the models and algorithms in the experiments are implemented on PyTorch~\cite{pytorch} and PyTorch-Geometric~\cite{pytorch-geometric}. The software versions are python=3.6.9, torch=1.2.0, CUDA=10.2.89, pytorch\_geometric=1.3.2.

\paragraph{Topology update.} For \textit{Delete}, before edge deletion, we remove all self-loop edges in the original graph. Then the edges are deleted according to Algorithm~\ref{alg:edge deletion} with a threshold. After edge deletion, we add back the removed self-loop edges. For \textit{Add}, we constrain the number of added edges to be less than 4 times of the number of edges in the original graph. This threshold is used to decide the number of candidate edges for addition, i.e., $k$. We get the top-$k$ edges from the  $n\times n$ potential edges according to the label correlation (i.e., $g_v^{\top}g_u$). After filtering the edges already in the graph, we add new edges using Algorithm~\ref{alg:edge addition}. For \textit{Modify}, we constrain the total number of added edges to be the same as the number of deleted edges because tuning the parameters for edge deletion and addition jointly will result in high complexity. This constraint also helps maintain the graph topology to some degree by not changing the structure too much. We conduct edge deletion first, and then add the same number of edges as that of the deleted edges. We ensure that deleted edges will not be added back. 


\paragraph{Training node augmentation.} For training node augmentation,  we use two models trained with swapped training and validation set to label the nodes in the test set. Only the nodes having the same label prediction from the two models can be added to the  augmented training set. A confidence threshold is used to control the number of pre-selected nodes for addition. We count the number of nodes from each class in the pre-selected nodes and obtain the class with the minimum number of pre-selected nodes. This number is used to control the number of added nodes for all classes (i.e., the class balance trick) to avoid introducing additional biases. 

\paragraph{Joint use of TU and TNA.} For experiments that jointly use topology update and training node augmentation, we apply the two techniques independently and use the thresholds selected by each algorithm individually to avoid the high complexity of joint parameter tuning. Denote the optimal parameter for topology update and training node augmentation as $\tau_{tu}$ and $\tau_{tna}$, respectively. We consider three configurations, i.e., $(\tau_{tu}, 0)$, $(0, \tau_{tna})$ and $( \tau_{tu}, \tau_{tna})$ (setting the $\tau=0$ means disabling the algorithm) and select the best configuration using the validation accuracy. The reported results is the test accuracy of the selected configuration. Therefore, our framework still has the potential to perform even better if more fine-grained tuning on the thresholds parameters are conducted.

All the thresholds mentioned above are determined totally by the classification accuracy on the validation set.

\end{document}